\theoremstyle{plain}
\newtheorem{theorem}{Theorem}[section]
\newtheorem{lemma}{Lemma}[section]
\newtheorem{assumption}[theorem]{Assumption}
\icmltitlerunning{Ringmaster ASGD: The First Asynchronous SGD with Optimal Time Complexity}
\definecolor{bgcolor}{rgb}{0.76,0.88,0.50}
\definecolor{bgcolor0}{rgb}{0.93,0.99,1}
\definecolor{bgcolor1}{rgb}{0.8,1,1}
\definecolor{bgcolor2}{rgb}{0.8,1,0.8}
\definecolor{bgcolor3}{rgb}{0.50,0.90,0.50}
\definecolor{mydarkgreen}{RGB}{39,130,67}
\definecolor{mydarkorange}{RGB}{236,147,14}
\definecolor{mydarkred}{RGB}{192,47,25}
\definecolor{ruby}{RGB}{155,17,30}
\definecolor{chili}{RGB}{191,0,0}
\definecolor{sangria}{RGB}{146,0,10}
\definecolor{burgundy}{RGB}{128,0,32} 
\definecolor{darkred}{RGB}{132,0,0} 
\definecolor{cherry}{RGB}{192,0,0} 
\definecolor{grey}{RGB}{80,80,80} 
\definecolor{blue}{RGB}{0,0,255}
\newcommand{\algname}[1]{{{\sf\footnotesize #1}}\xspace}
\newcommand{\norm}[1]{\left\| #1 \right\|}
\newcommand{\sqnorm}[1]{\left\| #1 \right\|^2}
\newcommand{\inp}[2]{\left\langle#1,#2\right\rangle} % inner product
\newcommand{\abs}[1]{\left| #1 \right|}
\newcommand{\R}{\mathbb{R}} % reals
\newcommand{\N}{\mathbb{N}} % reals
\newcommand{\E}[1]{\mathbb{E}\left[#1\right]}
\newcommand{\Exp}[1]{{\mathbb{E}}\left[#1\right]}
\newcommand{\ExpSub}[2]{{\mathbb{E}}_{#1}\left[#2\right]}
\newcommand{\cD}{\mathcal{D}}
\newcommand{\cN}{\mathcal{N}}
\newcommand{\cO}{\mathcal{O}}
\newcommand{\mA}{\mathbf{A}}
\newcommand{\eqdef}{:=}
\newcommand{\vast}{\bBigg@{4}}
\DeclareMathOperator*{\argmin}{arg\,min}
\def\<{\left\langle}
\def\>{\right\rangle}
\def\[{\left[}
\def\]{\right]}
\def\({\left(}
\def\){\right)}
\newcommand{\flr}[1]{\left\lfloor #1\right\rfloor} % inner product
\theoremstyle{theorem}
\newenvironment{restate-theorem}[1]
{\innercustomthm}
{\endinnercustomthm}
\newenvironment{restate-lemma}[1]
{\innercustomlemma}
{\endinnercustomlemma}
\newenvironment{restate-proposition}[1]
{\innercustomproposition}
{\endinnercustomproposition}
\newcommand*{\sketchproofname}{Sketch of Proof}
\begin{document}

\twocolumn[
% \icmltitle{Ringmaster ASGD: Taming Asynchronous SGD for Optimality}
\icmltitle{Ringmaster ASGD: The First Asynchronous SGD \\ with Optimal Time Complexity}
% \icmltitle{Ringmaster: Taming Asynchronous SGD for Attaining Optimal Time Complexity}

% It is OKAY to include author information, even for blind
% submissions: the style file will automatically remove it for you
% unless you've provided the [accepted] option to the icml2025
% package.

% List of affiliations: The first argument should be a (short)
% identifier you will use later to specify author affiliations
% Academic affiliations should list Department, University, City, Region, Country
% Industry affiliations should list Company, City, Region, Country

% You can specify symbols, otherwise they are numbered in order.
% Ideally, you should not use this facility. Affiliations will be numbered
% in order of appearance and this is the preferred way.
\icmlsetsymbol{equal}{*}

\begin{icmlauthorlist}
\icmlauthor{Artavazd Maranjyan}{kaust}
\icmlauthor{Alexander Tyurin}{airi,skoltech}
\icmlauthor{Peter Richt\'{a}rik}{kaust}
\end{icmlauthorlist}

\icmlaffiliation{kaust}{King Abdullah University of Science and Technology, Thuwal, Saudi Arabia}
\icmlaffiliation{airi}{AIRI, Moscow, Russia}
\icmlaffiliation{skoltech}{Skolkovo Institute of Science and Technology, Moscow, Russia}

\icmlcorrespondingauthor{Artavazd Maranjyan}{\href{https://artomaranjyan.github.io/}{https://artomaranjyan.github.io/}}

% You may provide any keywords that you
% find helpful for describing your paper; these are used to populate
% the "keywords" metadata in the PDF but will not be shown in the document
\icmlkeywords{Asynchronous SGD, Ringmaster, optimal time complexity, Machine Learning, ICML}

\vskip 0.3in
]

\printAffiliationsAndNotice{}  % leave blank if no need to mention equal contribution
% \printAffiliationsAndNotice{\icmlEqualContribution} % otherwise use the standard text.

\setcounter{footnote}{1}

\begin{abstract}

    Asynchronous Stochastic Gradient Descent (\algname{Asynchronous SGD}) is a cornerstone method for parallelizing learning in distributed machine learning. 
    However, its performance suffers under arbitrarily heterogeneous computation times across workers, leading to suboptimal time complexity and inefficiency as the number of workers scales. 
    While several \algname{Asynchronous SGD} variants have been proposed, recent findings by \citet{tyurin2024optimal} reveal that none achieve optimal time complexity, leaving a significant gap in the literature. 
    In this paper, we propose \algname{Ringmaster ASGD}, a novel \algname{Asynchronous SGD} method designed to address these limitations and tame the inherent challenges of \algname{Asynchronous SGD}. 
    We establish, through rigorous theoretical analysis, that \algname{Ringmaster ASGD} achieves optimal time complexity under arbitrarily heterogeneous and dynamically fluctuating worker computation times. 
    This makes it the first \algname{Asynchronous SGD} method to meet the theoretical lower bounds for time complexity in such scenarios.
\end{abstract}

\section{Introduction}
\label{sec:introduction}

We consider stochastic nonconvex optimization problems of the form
\begin{equation*}
  \min_{x \in \R^d} \left\{f(x) \eqdef \ExpSub{\xi \sim \cD}{f(x;\xi)} \right\},
\end{equation*}
where $f \,:\, \R^d \times \mathbb{S}_{\xi} \to \R,$ $\R^d$ is a linear space, and $\mathbb{S}_{\xi}$ is a sample space.
In machine learning, $f(x;\xi)$ denotes the loss of a model parameterized by $x$ on a data sample $\xi$, and $\cD$ denotes the distribution of the training dataset.
In nonconvex optimization, our goal is to find an $\varepsilon$--stationary point, i.e., a (random) vector $\bar{x} \in \R^d$ such that $\mathbb{E}\left[\norm{\nabla f(\bar{x})}^2\right] \leq \varepsilon$.

We consider a setup involving $n$ workers (e.g., CPUs, GPUs, servers), each with access to the same distribution $\cD$.
Each worker is capable of computing independent, unbiased stochastic gradients with bounded variance (Assumption \ref{ass:stochastic_variance_bounded}).
We consider a setup with asynchronous, heterogeneous, and varying computation speeds.
We aim to account for all potential scenarios, such as random outages, varying computational performance over time, and the presence of slow or straggling workers \citep{dean2013tail}.

This setup is common in both datacenter environments \citep{dean2012large} and federated learning \citep{konevcny2016federated,mcmahan2016federated,kairouz2021advances} for distributed training. 
Although parallelism facilitates rapid convergence, variations in worker speeds make effective coordination more challenging.

Asynchronous Stochastic Gradient Descent (\algname{Asynchronous SGD}) is a popular approach for parallelization in such distributed settings.
The algorithm operates as follows (\Cref{alg:asgd}), formalized below.
% \begin{algorithm}[H]
%     \caption{\algname{Asynchronous SGD}}
%     \label{alg:asgd}
%     \begin{algorithmic}
%         \STATE \textbf{Input:} point $x_0 \in \R^d$, stepsizes $\gamma_k \geq 0$
%         % \STATE Each worker $i \in [m]$ computes the stochastic gradient at $x_0$ in parallel
%         \STATE All $n$ workers start computing stochastic gradient at $x_0$ 
%         \FOR{$k = 0, \dots, K - 1$}
%             \STATE Gradient $\nabla f(x^{k-\delta^k}; \xi^{k-\delta^k})$ arrives from worker $i$
%             \STATE Update: $x^{k+1} = x^{k} - \gamma_k \nabla f(x^{k-\delta^k}; \xi^{k-\delta^k})$
%             \STATE Worker $i$ begins calculating $\nabla f\(x^{k+1}; \xi^{k+1}\)$
%         \ENDFOR
%     \end{algorithmic}
% \end{algorithm}
\begin{algorithm}[H]
    \caption{\algname{Asynchronous SGD}}
    \label{alg:asgd}
    \begin{algorithmic}
        \STATE \textbf{Input:} point $x^0 \in \R^d$, stepsizes $\gamma_k \geq 0$
        % \STATE Each worker $i \in [m]$ computes the stochastic gradient at $x_0$ in parallel
        % \STATE All $n$ workers start computing stochastic gradient at $x_0$ with $k = 0$
        % \STATE Workers start computing stoch. gradient at $x^k$ with $k = 0$
        % \FOR{$k = 0, \dots, K - 1$}
        \STATE Set $k = 0$
        \STATE Workers start computing stochastic gradients at $x^0$
        \WHILE{True}
            \STATE Gradient $\nabla f(x^{k-\delta^k}; \xi^{k-\delta^k}_{i})$ arrives from worker $i$
            \STATE Update: $x^{k+1} = x^{k} - \gamma_k \nabla f(x^{k-\delta^k}; \xi^{k-\delta^k}_{i})$
            \STATE Worker $i$ begins calculating $\nabla f(x^{k+1}; \xi^{k+1}_{i})$
            \STATE Update the iteration number $k = k + 1$
        \ENDWHILE
    \end{algorithmic}
\end{algorithm}
This is a greedy and asynchronous method. 
Once a worker finishes computation of the stochastic gradient, it immediately sends the gradient to the server, which updates the current iterate without waiting for other workers.
Notice that, unlike vanilla \algname{SGD}, the update is performed using the stochastic gradient calculated at the point $x^{k-\delta^k}$, where the index $k-\delta^k$ corresponds to the iteration when the worker started computing the gradient, which can be significantly outdated.
The sequence $\{\delta^k\}$ is a sequence of delays of \algname{Asynchronous SGD}, where $\delta^k \geq 0$ is defined as the difference between the iteration when worker $i$ started computing the gradient and iteration $k$, when it was applied.

\algname{Asynchronous SGD} methods have a long history, originating in 1986 \citep{tsitsiklis1986distributed} and regaining prominence with the seminal work of \citet{recht2011hogwild}.
The core idea behind \algname{Asynchronous SGD} is simple: to achieve fast convergence, all available resources are utilized by keeping all workers busy at all times.
This principle has been validated in numerous studies, showing that \algname{Asynchronous SGD} can outperform naive synchronous \algname{SGD} methods \citep{feyzmahdavian2016asynchronous,dutta2018slow,nguyen2018sgd,arjevani2020tight,cohen2021asynchronous,mishchenko2022asynchronous,koloskova2022sharper,islamov2023asgrad,feyzmahdavian2023asynchronous}.

\subsection{Assumptions}
\label{sec:assumption}

In this paper, we consider the standard assumptions from the nonconvex world.
\begin{assumption}
    \label{ass:lipschitz_constant}
    Function $f$ is differentiable, and its gradient is $L$--Lipschitz continuous, i.e., 
    $$
    \norm{\nabla f(x) - \nabla f(y)} \leq L \norm{x - y}, \; \forall x, y \in \R^d.
    $$
 \end{assumption}

 \begin{assumption}
    \label{ass:lower_bound}
    There exist $f^{\inf} \in \R$ such that $f(x) \geq f^{\inf}$ for all $x \in \R^d$. 
    We define $\Delta \eqdef f(x^0) - f^{\inf},$ where $x^0$ is the starting point of optimization methods.
 \end{assumption}

\begin{assumption}
    \label{ass:stochastic_variance_bounded}
    The stochastic gradients $\nabla f(x; \xi)$ are unbiased and have bounded variance $\sigma^2\geq0$.  
    Specifically,  
    \begin{gather*}
         \ExpSub{\xi}{\nabla f(x;\xi)} = \nabla f(x), \; \forall x \in \R^d,\\
         \ExpSub{\xi}{\sqnorm{\nabla f(x;\xi) - \nabla f(x)}} \leq \sigma^2,\; \forall x \in \R^d.
    \end{gather*} 
\end{assumption} 

% \textbf{Notations:} 
\subsection{Notations}
$\R_{+} \eqdef [0, \infty);$ $\N \eqdef \{1, 2, \dots\};$ $\norm{x}$ is the standard Euclidean norm of $x \in \R^d$; $\inp{x}{y} = \sum_{i=1}^{d} x_i y_i$ is the standard dot product; for functions $f, g:\mathcal{Z}\to \R$:  $g = \cO(f)$ means that there exist $C > 0$ such that $g(z) \leq C \times f(z)$ for all $z \in \mathcal{Z}$, $g = \Omega(f)$ means that there exist $C > 0$ such that $g(z) \geq C \times f(z)$ for all $z \in \mathcal{Z},$ and $g = \Theta(f)$ means that $g = \cO(f)$ and $g = \Omega(f);$ $[n] \eqdef \{1, 2, \dots, n\};$ $\Exp{\cdot}$ refers to mathematical expectation.

\subsection{Related Work}

Despite the variety of \algname{Asynchronous SGD} algorithms proposed over the years, a fundamental question remained unresolved: 
\textit{What is the optimal strategy for parallelization in this setting?} 
% To gain a basic intuition about this question, we now consider the \emph{fixed computation model} \citep{mishchenko2022asynchronous}. In this model, it is assumed that
% \begin{quote}
% worker $i$ takes no more than $\tau_i$ seconds to compute a single stochastic gradient.
% \end{quote}
% Without loss of generality, we assume that the times are sorted: $\tau_1 \leq \dots \leq \tau_n.$ A more general computation model, the univeral computation model, we consider in Section~\ref{}.

When we have one worker, the optimal number of stochastic gradients required to find an $\varepsilon$--stationary point is 
$$
    \Theta\left(\frac{L \Delta}{\varepsilon} + \frac{\sigma^2 L \Delta}{\varepsilon^2}\right),
$$ 
achieved by the vanilla \algname{SGD} method \citep{ghadimi2013stochastic,arjevani2022lower}.
In the parallel setting with many workers, several approaches have been proposed to obtain oracle lower bounds \citep{scaman2017optimal,woodworth2018graph, arjevani2020tight,lu2021optimal}.
Recent work by \citet{tyurin2024optimal, tyurin2024tighttimecomplexitiesparallel} addressed the question by establishing lower bounds for the \emph{time complexity} of asynchronous methods under the \emph{fixed computation model} and the \emph{universal computation model}.
Surprisingly, they demonstrated that none of the existing \algname{Asynchronous SGD} methods are optimal.
Moreover, they introduced a minimax optimal method, \algname{Rennala SGD}, which achieves the theoretical lower bound for time complexity.

\begin{table*}
    \caption{
    The time complexities of asynchronous stochastic gradient methods, which preform the step $x^{k+1} = x^{k} - \gamma_k \nabla f(x^{k-\delta^k}; \xi^{k-\delta^k}_{i}),$ to get an $\varepsilon$-stationary point in the nonconvex setting. 
    In this table, we consider the \emph{fixed computation model} from Section~\ref{sec:prel}. 
    Abbr.: $\sigma^2$ is defined as ${\rm \mathbb{E}}_{\xi}[\|\nabla f(x;\xi) - \nabla f(x)\|^2] \leq \sigma^2$ for all $x \in \R^d,$ $L$ is the smoothness constant of $f$, $\Delta \eqdef f(x^0) - f^{\inf}$, $\tau_i \in [0, \infty]$ is the time bound to compute a single stochastic gradient by worker $i$.
    }
    \label{table:complexities}
    \centering 
    \begin{threeparttable}  
        \begin{tabular}[t]{cccc}
    \toprule
        \bf  Method & \bf The Worst-Case Time Complexity Guarantees & \bf Optimal & \bf \makecell{Adaptive to Changing \\ Computation Times} \\
        \midrule
        \makecell{\algname{Asynchronous SGD} \\ \citep{koloskova2022sharper} \\ \citep{mishchenko2022asynchronous}} & $\left(\frac{1}{n} \sum\limits_{i=1}^n \frac{1}{\tau_i}\right)^{-1}\left(\frac{L \Delta}{\varepsilon} + \frac{\sigma^2 L \Delta}{n \varepsilon^2}\right)$ & {\color{mydarkred}\ding{56}} & \textbf{\color{mydarkgreen} \ding{52}} \\
        \midrule
        \makecell{\algname{Naive Optimal ASGD} \textbf{(new)} \\ 
        (\Cref{alg:ringmaster_short}; \Cref{thm:ringmaster_short})} 
        & $\min \limits_{m \in [n]} \left[\left(\frac{1}{m}\sum\limits_{i=1}^m \frac{1}{\tau_i}\right)^{-1}\left(\frac{L \Delta}{\varepsilon} + \frac{\sigma^2 L \Delta}{m \varepsilon^2}\right)\right]$ 
        & \textbf{\color{mydarkgreen} \ding{52}} 
        & {\color{mydarkred} \ding{56}} \\
        \midrule
        \makecell{\algname{Ringmaster ASGD} \textbf{(new)} \\ (Algorithms~\ref{alg:ringmasternew} or \ref{alg:ringmasternewstops}; \\ Theorem~\ref{thm:optimal_ringmaster})} & $\min \limits_{m \in [n]} \left[\left(\frac{1}{m}\sum\limits_{i=1}^m \frac{1}{\tau_i}\right)^{-1}\left(\frac{L \Delta}{\varepsilon} + \frac{\sigma^2 L \Delta}{m \varepsilon^2}\right)\right]$ & \textbf{\color{mydarkgreen} \ding{52}} & \textbf{\color{mydarkgreen} \ding{52}}\\
        \midrule
        \midrule
        \makecell{Lower Bound \\ \citep{tyurin2024optimal}} & $\min \limits_{m \in [n]} \left[\left(\frac{1}{m}\sum\limits_{i=1}^m \frac{1}{\tau_i}\right)^{-1}\left(\frac{L \Delta}{\varepsilon} + \frac{\sigma^2 L \Delta}{m \varepsilon^2}\right)\right]$ & --- & --- \\
        \bottomrule
        \end{tabular}
    \end{threeparttable}
\end{table*}

\begin{algorithm}[t]
    \caption{\algname{Rennala SGD} \citep{tyurin2024optimal}}
    \label{alg:rennala}
    \begin{algorithmic}
        \STATE \textbf{Input:} point $x^0 \in \R^d$, stepsize $\gamma > 0$, batch size $B\in \N$
        \STATE Workers start computing stochastic gradients at $x^0$
        \FOR{$k = 0, \dots, K - 1$}
            \STATE $g_k = 0$; $b=0$
            \WHILE{$b < B$}
                \STATE Gradient $\nabla f(x^{k - \delta^{k_b}}; \xi^{k_b})$ arrives from worker $i_{k_b}$
                \color{mydarkred}
                \IF{$\delta^{k_b} = 0$}
                    \color{black}
                    \STATE $g_k = g_k + \nabla f(x^{k - \delta^{k_b}}; \xi^{k_b})$; $\; b=b+1$
                \ENDIF
                \color{black}
                \color{mydarkred}
                \STATE Worker $i_{k_b}$ begins calculating gradient at $x^{k}$ \\ 
                \color{black}
            \ENDWHILE
            \STATE Update: $x^{k+1} = x^{k} - \gamma \frac{g_k}{B}$
        \ENDFOR
    \end{algorithmic}
\end{algorithm}

\algname{Rennala SGD} is \textit{semi-asynchronous} and can be viewed as  \algname{Minibatch SGD} (which takes {\em synchronous} iteration/model updates) combined with an {\em asynchronous} minibatch collection mechanism.
Let us now explain how \algname{Rennala SGD} works, in the notation of \Cref{alg:rennala}, which facilitates comparison with further methods described in this work.
Due to the condition $\delta^{k_b} = 0$, which ignores all stochastic gradients calculated at the previous points, \algname{Rennala SGD}  performs the step 
$$
x^{k+1} = x^{k} - \gamma \frac{1}{B} \sum_{j=1}^{B} \nabla f(x^{k}; \xi^{k_j}),
$$
where $\xi^{k_1}, \dots,\xi^{k_B}$ are independent samples from $\cD$ collected asynchronously across all workers.
Note that the workers compute the stochastic gradients at the {\em same} point $x^k$, with worker $i$ computing $B_i \geq 0$ gradients such that $\sum_{i=1}^n B_i = B$.

This approach has at least two fundamental drawbacks: \\
\phantom{XX} (i) Once the fastest worker completes the calculation of the first stochastic gradient, $\nabla f(x^k; \xi^{k_1})$, it begins computing another stochastic gradient at the same point $x^k$, even though it already possesses additional information from $\nabla f(x^k; \xi^{k_1})$.
\algname{Rennala SGD} does not update iterate $x^k$ immediately. \\
 \phantom{XX} (ii) Once \algname{Rennala SGD} finishes the inner while loop, it will ignore all stochastic gradients that were being calculated before the loop ended, even if a worker started the calculation just a moment before.
In contrast, \algname{Asynchronous SGD} avoids these issues by fully utilizing all currently available information when asking a worker to calculate the next stochastic gradient and not ignoring any stochastic gradients.

These revelations raise an intriguing question: 
\textit{Is asynchronous parallelization fundamentally flawed?}
If the optimal solution lies in synchronous approaches, should the community abandon \algname{Asynchronous SGD} and redirect its focus to developing synchronous methods?
Perhaps the widespread enthusiasm for \algname{Asynchronous SGD} methods was misplaced.

Alternatively, could there be a yet-to-be-discovered variant of \algname{Asynchronous SGD} that achieves optimal time complexity?
In this work, we answer this question affirmatively.
We reestablish the prominence of \algname{Asynchronous SGD} by proposing a novel asynchronous optimization method that attains optimal time complexity.

\subsection{Contributions}
\label{sec:contributions}
Our contributions are summarized as follows:

We introduce a novel asynchronous stochastic gradient descent method, \algname{Ringmaster ASGD}, described in \Cref{alg:ringmasternew} and \Cref{alg:ringmasternewstops}.
This is the first asynchronous method to achieve optimal time complexity under arbitrary heterogeneous worker compute times (see Table~\ref{table:complexities}).
Specifically, in Theorems~\ref{thm:optimal_ringmaster} and \ref{thm:optimal_ringmaster_dynamic}, we establish time complexities that match the lower bounds developed by \citet{tyurin2024optimal,tyurin2024tighttimecomplexitiesparallel}. \\~\\
Our work begins with another new  optimal method, \algname{Naive Optimal ASGD} (\Cref{alg:ringmaster_short}).
We demonstrate that \algname{Naive Optimal ASGD} achieves optimality under the fixed computation model.
However, we find that it is overly simplistic and lacks robustness in scenarios where worker computation times are chaotic and dynamic.
To address this limitation, we designed \algname{Ringmaster ASGD}, which combines the strengths of \algname{Naive Optimal ASGD}, previous non-optimal versions of \algname{Asynchronous SGD} methods \citep{cohen2021asynchronous,koloskova2022sharper,mishchenko2022asynchronous}, and the semi-synchronous \algname{Rennala SGD} \citep{tyurin2024optimal}. \\~\\
All our claims are supported by rigorous theoretical analysis showing the optimality of the method under virtually any computation scenario,  including unpredictable downtimes, fluctuations in computational performance over time, delays caused by slow or straggling workers, and challenges in maintaining synchronization across distributed systems (see Sections~\ref{sec:theory} and \ref{sec:dyn}).
Using numerical experiments, we demonstrate that \algname{Ringmaster ASGD} outperforms existing methods (see Section~\ref{sec:experiments}).
% $\bullet$ We extend the setting to allow for dynamically changing compute times, where $\tau_i$ for each worker $i$ can vary arbitrarily during the training process. Our generalized method, \Cref{alg:ringmaster}, remains optimal under these conditions.
% $\bullet$ In \Cref{sec:experiments}, we provide numerical comparisons between \algname{Ringmaster ASGD} and the optimal synchronous method \algname{Rennala SGD}. 
% Our results demonstrate that while both methods are theoretically optimal, \algname{Ringmaster ASGD} outperforms \algname{Rennala SGD} in practice due to its asynchronous nature.  

\section{Preliminaries and Naive Method}
\label{sec:prel}
To compare methods, we consider the \emph{fixed computation model} \citep{mishchenko2022asynchronous}.
In this model, it is assumed that
% \begin{quote}
% worker $i$ takes no more than $\tau_i$ seconds to compute a single stochastic gradient.
% \end{quote}
\begin{equation}
\begin{aligned}
    \text{worker } i \text{ takes no more than } \tau_i \text{ seconds} \\
    \text{to compute a single stochastic gradient.}
    \label{eq:worker-time}
\end{aligned}
\end{equation}
and 
\begin{equation}
    \label{eq:sorted_t_i}
    0<\tau_1\le \tau_2 \le \cdots \le \tau_n,
\end{equation}
without loss of generality.
However, in \Cref{sec:dyn}, we will discuss how one can easily generalize our result to arbitrary computational dynamics, e.g., when the computation times are not bounded by the fixed values $\{\tau_i\}$, and can change in arbitrary/chaotic manner in time.
Under the fixed computation model, \citet{tyurin2024optimal} proved that the optimal time complexity lower bound is
\begin{align}
    \label{eq:rennala}
 \textstyle T_{\textnormal{R}} \eqdef \Theta\left(\min\limits_{m \in [n]} \left[\left(\frac{1}{m} \sum\limits_{i=1}^{m} \frac{1}{\tau_{i}}\right)^{-1} \left(\frac{L \Delta}{\varepsilon} + \frac{\sigma^2 L \Delta}{m \varepsilon^2}\right)\right]\right)
\end{align}
seconds achieved by \algname{Rennala SGD} (\Cref{alg:rennala}).
However, the best analysis of \algname{Asynchronous SGD} \citep{koloskova2022sharper,mishchenko2022asynchronous} with appropriate stepsizes achieves  the  time complexity (see Sec.~L in \citep{tyurin2024optimal})
\begin{align}
    \label{eq:async}
     T_{\textnormal{A}} \eqdef \Theta\left(\left(\frac{1}{n} \sum\limits_{i=1}^{n} \frac{1}{\tau_{i}}\right)^{-1} \left(\frac{L \Delta}{\varepsilon} + \frac{\sigma^2 L \Delta}{n \varepsilon^2}\right)\right).
\end{align}
Note that $T_{\textnormal{R}} \leq T_{\textnormal{A}}$; this is because $\min_{m \in [n]} g(m) \leq g(n)$ for any function $g \,:\, \N \to \R$.
Moreover, $T_{\textnormal{R}}$ can {\em arbitrarily} smaller.
To illustrate the difference, consider an example with $\tau_i = \sqrt{i}$ for all $i \in [n]$.
Then, 
$$
T_{\textnormal{R}} = \Theta\left(\max\left[\frac{\sigma L \Delta}{\varepsilon^{3/2}}, \frac{L \Delta \sigma^2}{\sqrt{n} \varepsilon^2}\right]\right)
$$
and 
$$
T_{\textnormal{A}} = \Theta\left(\max\left[\frac{\sqrt{n} L \Delta}{\varepsilon}, \frac{L \Delta \sigma^2}{\sqrt{n} \varepsilon^2}\right]\right),
$$
see the derivations in Section~\ref{sec:deriv}.
If $n$ is large, as is often encountered in modern large-scale training scenarios, $T_{\textnormal{A}}$ can be arbitrarily larger than $T_{\textnormal{R}}$.
Thus, the best-known variants of \algname{Asynchronous SGD} are not robust to the scenarios when the number of workers is large and computation times are heterogeneous/chaotic.

\subsection{A Naive Optimal Asynchronous SGD}

We now introduce our first simple and effective strategy to improve the time complexity $T_{\textnormal{A}}$.
Specifically, we hypothesize that selecting a {\em subset} of workers at the beginning of the optimization process, instead of utilizing all available workers, can lead to a more efficient and stable approach.
As we shall show, this adjustment not only simplifies the computational dynamics, but also proves sufficient to achieve the optimal time complexity.

The idea is to select the fastest $[m] \eqdef \{1, 2, \dots, m\}$ workers, thereby ignoring the slow ones and eliminating delayed gradient updates. 
We demonstrate that the optimal algorithm involves finding the ideal number of workers $m$ and running \algname{Asynchronous SGD} (\Cref{alg:asgd}) on those workers.
The method is formalized in \Cref{alg:ringmaster_short}.
\begin{algorithm}[H]
    \caption{\algname{Naive Optimal ASGD}}
    \label{alg:ringmaster_short}
    \begin{algorithmic}[1]
        \STATE Find 
        $ 
        m_{\star} \in \argmin\limits_{m \in [n]} \left[\left(\frac{1}{m} \sum\limits_{i=1}^m \frac{1}{\tau_i}\right)^{-1} \left( 1 + \frac{\sigma^2}{m \varepsilon} \right)\right] 
        $
        \STATE Run \algname{Asynchronous SGD} (\Cref{alg:asgd}) on $[m_{\star}]$ workers
    \end{algorithmic}
\end{algorithm}

\begin{figure*}[t]
\begin{minipage}[t]{0.48\textwidth}

\begin{algorithm}[H]
    \caption{\algname{Ringmaster ASGD} (without calculation stops)}
    \label{alg:ringmasternew}
    \begin{algorithmic}
        \STATE \textbf{Input:} point $x^0 \in \R^d$, stepsize $\gamma > 0,$ delay threshold $R \in \N$
        \STATE Set $k = 0$
        \STATE Workers start computing stochastic gradients at $x^0$
        \WHILE{True}
            \STATE Gradient $\nabla f(x^{k-\delta^k}; \xi^{k-\delta^k}_{i})$ arrives from worker $i$
            \color{mydarkgreen}
            \IF{$\delta^k < R$}
            \color{black}
            \STATE Update: $x^{k+1} = x^{k} - \gamma \nabla f(x^{k-\delta^k}; \xi^{k-\delta^k}_{i})$
            \STATE Worker $i$ begins calculating $\nabla f(x^{k+1}; \xi^{k+1}_{i})$
            \STATE Update the iteration number $k = k + 1$
            \ELSE
            \STATE Ignore the outdated gradient $\nabla f(x^{k-\delta^k}; \xi^{k-\delta^k}_{i})$
            \STATE Worker $i$ begins calculating $\nabla f\(x^{k}; \xi^{k}_{i}\)$
            \ENDIF
        \ENDWHILE
    \end{algorithmic}
\end{algorithm}

\end{minipage}
\hfill
\begin{minipage}[t]{0.48\textwidth}

\begin{algorithm}[H]
    \caption{\algname{Ringmaster ASGD} (with calculation stops)}
    \label{alg:ringmasternewstops}
    \begin{algorithmic}
        \STATE \textbf{Input:} point $x^0 \in \R^d$, stepsize $\gamma > 0$, delay threshold $R \in \N$
        \STATE Set $k = 0$
        \STATE Workers start computing stochastic gradients at $x^0$
        \WHILE{True}
            \color{mydarkgreen}
            \STATE Stop calculating stochastic gradients with delays $\geq R$, and start computing new ones at $x^k$ instead
            \color{black}
            \STATE Gradient $\nabla f(x^{k-\delta^k}; \xi^{k-\delta^k}_{i})$ arrives from worker $i$
            \STATE Update: $x^{k+1} = x^{k} - \gamma \nabla f(x^{k-\delta^k}; \xi^{k-\delta^k}_{i})$
            \STATE Worker $i$ begins calculating $\nabla f(x^{k+1}; \xi^{k+1}_{i})$
            \STATE Update the iteration number $k = k + 1$
        \ENDWHILE
    \end{algorithmic}
    (The core and essential modifications to Alg.~\ref{alg:asgd} are highlighted in {\color{mydarkgreen} green}. 
    Alternatively, Alg.~\ref{alg:ringmasternew} is Alg.~\ref{alg:asgd} with a specific choice of adaptive stepsizes defined by \eqref{eq:adapative_step_size})
\end{algorithm}

\end{minipage}
\end{figure*}

The choice of $m_\star$ in \Cref{alg:ringmaster_short} effectively selects the fastest $m_\star$ workers only.
Note that it is possible for $m_\star$ to be equal to $n$, meaning that all workers participate, which occurs when all workers are nearly equally fast.
In this case, the harmonic mean in Line 1 of \Cref{alg:ringmaster_short} remains unchanged if all $\tau_i$s are equal, but the right-hand side decreases.
However, if some workers experience large delays, the harmonic mean in Line 1 increases as $m$ grows, introducing a trade-off between the two factors.
Conversely, if most workers are very slow, it may be optimal to have as few as one worker participating.

Next, we can easily prove that our algorithm, \algname{Naive Optimal ASGD} (\Cref{alg:ringmaster_short}), is optimal in terms of time complexity.
\begin{theorem}
    \label{thm:ringmaster_short}
    Consider the \emph{fixed computation model} (\eqref{eq:worker-time} and \eqref{eq:sorted_t_i}).
    Let Assumptions \ref{ass:lipschitz_constant}, \ref{ass:lower_bound}, and \ref{ass:stochastic_variance_bounded} hold. Then 
    \algname{Naive Optimal ASGD} (\Cref{alg:ringmaster_short}) with $m_{\star}$ workers achieves the optimal time complexity \eqref{eq:rennala}.
\end{theorem}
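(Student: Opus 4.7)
The plan is to reduce the analysis to the existing time complexity bound for standard \algname{Asynchronous SGD} stated in \eqref{eq:async}, and then exploit the freedom we have in choosing which workers participate. Since \algname{Naive Optimal ASGD} literally executes \algname{Asynchronous SGD} (\Cref{alg:asgd}) on the subset $[m_\star]$ of the fastest workers, every worker-compute-time parameter entering the analysis of \citet{koloskova2022sharper,mishchenko2022asynchronous} is simply $\tau_1,\dots,\tau_{m_\star}$, and the ``number of workers'' variable is $m_\star$ rather than $n$.

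First I would invoke the known guarantee \eqref{eq:async} for \algname{Asynchronous SGD} applied to this smaller pool. Under \Cref{ass:lipschitz_constant,ass:lower_bound,ass:stochastic_variance_bounded} and the fixed computation model \eqref{eq:worker-time}, this yields, with the standard choice of stepsizes, a worst-case time complexity of
\begin{equation*}
\Theta\!\left(\left(\frac{1}{m_\star}\sum_{i=1}^{m_\star}\frac{1}{\tau_i}\right)^{-1}\!\left(\frac{L\Delta}{\varepsilon}+\frac{\sigma^2 L\Delta}{m_\star\varepsilon^2}\right)\right).
\end{equation*}
Next, I would note that by the defining property of $m_\star$ as a minimizer in Line~1 of \Cref{alg:ringmaster_short}, and since the factor $L\Delta/\varepsilon$ can be pulled out of the bracketed expression, the quantity above is equal (up to constants) to
\begin{equation*}
\min_{m\in[n]}\left[\left(\frac{1}{m}\sum_{i=1}^{m}\frac{1}{\tau_i}\right)^{-1}\!\left(\frac{L\Delta}{\varepsilon}+\frac{\sigma^2 L\Delta}{m\varepsilon^2}\right)\right]=T_{\textnormal{R}},
\end{equation*}
which matches \eqref{eq:rennala}. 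A short reconciliation is needed here because $m_\star$ in the algorithm is defined using the proxy $1+\sigma^2/(m\varepsilon)$ while $T_{\textnormal{R}}$ uses $L\Delta/\varepsilon+\sigma^2 L\Delta/(m\varepsilon^2)$; these differ only by the positive multiplicative constant $L\Delta/\varepsilon$, so their argmins coincide, and the two minimization problems yield the same optimal $m$.

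Finally, I would appeal to the matching lower bound of \citet{tyurin2024optimal} displayed in \eqref{eq:rennala}: since any method under the fixed computation model requires $\Omega(T_{\textnormal{R}})$ time, and \algname{Naive Optimal ASGD} achieves $\mathcal{O}(T_{\textnormal{R}})$, the method is time-optimal. The only mildly non-routine step is the verification that the standard \algname{Asynchronous SGD} analysis can legitimately be applied to the restricted pool $[m_\star]$; this is immediate because the algorithm never interacts with workers $m_\star+1,\dots,n$, so from the analysis' viewpoint the system simply has $m_\star$ workers with per-gradient time bounds $\tau_1\le\dots\le\tau_{m_\star}$. No delay-control or adaptivity argument is required here, which is precisely why this first method is called ``naive'' and why the more refined \algname{Ringmaster ASGD} is needed for the dynamic setting of \Cref{sec:dyn}.
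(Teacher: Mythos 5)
Your proposal is correct and follows essentially the same route as the paper's own (very short) proof: apply the known time complexity \eqref{eq:async} of \algname{Asynchronous SGD} to the restricted pool $[m_\star]$ and observe that the definition of $m_\star$ makes the resulting bound equal to \eqref{eq:rennala}. Your explicit reconciliation of the proxy objective $1+\sigma^2/(m\varepsilon)$ with the full expression via the positive constant factor $L\Delta/\varepsilon$ is a detail the paper leaves implicit, but it does not change the argument.
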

\begin{proof}
    The proof is straightforward.
   Indeed, the time complexity \eqref{eq:async} of \Cref{alg:asgd} with $m_*$ workers is
    \begin{align*}
        \Theta\left(\left(\frac{1}{m_*} \sum\limits_{i=1}^{m_*} \frac{1}{\tau_{i}}\right)^{-1} \left(\frac{L \Delta}{\varepsilon} + \frac{\sigma^2 L \Delta}{m_* \varepsilon^2}\right)\right),
    \end{align*}
    which equals to \eqref{eq:rennala} due to the definition of $m_*$ in \Cref{alg:ringmaster_short}.
\end{proof}
To the best of our knowledge, this is the first variant of \algname{Asynchronous SGD} that provides guarantees for achieving the optimal time complexity.

\subsection{Why Is \Cref{alg:ringmaster_short} Referred to as ``Naive''?}

Note that determining the optimal $m_{\star}$ requires the knowledge of the computation times $\tau_1, \dots, \tau_n$.
If the workers' computation times were indeed static in time, this would not be an issue, as these times could be obtained by querying a single gradient from each worker before the algorithm is run. However, in real systems, computation times are rarely static, and can vary with from iteration  to iteration \citep{dean2013tail,chen2016revisiting, dutta2018slow, maranjyan2025mindflayer}, or even become infinite at times, indicating down-time.

Naively selecting the fastest $m_*$ workers at the start of the method and keeping this selection unchanged may therefore lead to significant issues in practice. The computational environment may exhibit adversarial behavior, where worker speeds change over time.
For instance, initially, the first worker may be the fastest, while the last worker is the slowest.
In such cases, \algname{Naive Optimal ASGD} would exclude the slowest worker.
However, as time progresses, their performance may reverse, causing the initially selected $m_*$ workers, including the first worker, to become the slowest.
This exposes a critical limitation of the strategy: it lacks robustness to time-varying worker speeds.
% In the next section, we present a version of our algorithm that can accommodate dynamic computation times.

\section{Ringmaster ASGD}
We are now ready to present our new versions of \algname{Asynchronous SGD}, called \algname{Ringmaster ASGD} (\Cref{alg:ringmasternew} and \Cref{alg:ringmasternewstops}), which guarantee the \emph{optimal time complexity} without knowing the computation times a priori. 
Both methods are equivalent, up to  a minor detail that we shall discuss later. 
Let us first focus on \Cref{alg:ringmasternew}.

\subsection{Description}
% {\textbf{Description.}}
Let us compare \Cref{alg:ringmasternew} and \Cref{alg:asgd}:
the only difference, highlighted with {\color{mydarkgreen} green} color, lies in the fact that \Cref{alg:ringmasternew} disregards ``very old stochastic gradients,'' which are gradients computed at points with significant delay. 
Specifically, \Cref{alg:ringmasternew} receives $\nabla f(x^{k-\delta^k}; \xi^{k-\delta^k}_{i})$, compares $\delta^k$ to our {\em delay threshold} hyperparameter $R$.
If $\delta^k \geq R$, then \Cref{alg:ringmasternew} completely ignores this very outdated stochastic gradient and requests the worker to compute a new stochastic gradient at the most relevant point $x^k$.

Notice that \algname{Ringmaster ASGD} (\Cref{alg:ringmasternewstops}) is \Cref{alg:asgd} with the following adaptive stepsize rule:
\begin{equation}
\begin{aligned}
    \label{eq:adapative_step_size}
    &\gamma_k = 
    \begin{cases}
        \gamma, & \text{ if } \quad \bar{\delta}^k_i < R,\\
        0, &\text{ if }  \quad \bar{\delta}^k_i \geq R,
    \end{cases} 
 \\
    &\bar{\delta}^{k+1}_j = 
    \begin{cases}
     0, & \text{ if } \quad j = i, \\
        \bar{\delta}^{k}_j + 1,  & \text{ if }\quad  j\neq i \quad \& \quad \bar{\delta}^k_i < R,\\
        \bar{\delta}^{k}_j, &  \text{ if } \quad j\neq i \quad \& \quad \bar{\delta}^k_i \geq R,
    \end{cases}
\end{aligned}
\end{equation}
where $i$ is the  index of the worker whose stochastic gradient is applied at iteration $k,$ and where we initialize the {\em virtual sequence of delays} $\{\bar{\delta}^{k}_j\}_{k,j}$ by setting  $\bar{\delta}^0_j = 0$ for all $j \in [n]$.

% {\textbf{Delay threshold.}} 
\subsection{Delay Threshold} 
Returning to \Cref{alg:ringmasternew}, note that we have a parameter $R$ called the \emph{delay threshold}. 
When $R = 1$, the algorithm reduces to the classical \algname{SGD} method, i.e.,
$$
    x^{k+1} = x^k - \gamma \nabla f(x^k; \xi^k_i),
$$ 
since $\delta^k = 0$ for all $k \geq 0$.
In this case, the algorithm becomes highly conservative, ignoring all stochastic gradients computed at earlier points $x^{k-1}, \dots, x^0$. 
Conversely, if $R = \infty$, the method incorporates stochastic gradients with arbitrarily large delays, and becomes classical \algname{Asynchronous SGD}. Intuitively, there should be a balance -- a ``proper'' value of $R$ that would i) prevent the method from being overly conservative, while ii) ensuring stability by making sure that only informative stochastic gradients are used to update the model.
We formalize these intuitions by proposing an optimal $R$ in \Cref{sec:theory}. Interestingly, the value of $R$ does {\em not} depend on the computation times.

% {\textbf{Why do we ignore the old stochastic gradients?}}
\subsection{Why Do We Ignore the Old Stochastic Gradients?}
The primary reason is that doing so enables the development of the first optimal \algname{Asynchronous SGD} method that achieves the lower bounds (see Sections~\ref{sec:theory} and \ref{sec:dyn}).
Ignoring old gradients allows us to establish tighter convergence guarantees.
Intuitively, old gradients not only fail to provide additional useful information about the function $f$, but they can also negatively impact the algorithm's performance.
Therefore, disregarding them in the optimization process is essential for achieving our goal of developing an optimal \algname{Asynchronous SGD} method.

% {\textbf{Comparison to \algname{Rennala SGD}.}}
\subsection{Comparison to Rennala SGD}
Unlike \algname{Rennala SGD}, which combines a synchronous \algname{Minibatch SGD} update with an asynchronous minibatch collection strategy, \algname{Ringmaster ASGD} is fully asynchronous, which, as we show in our experiments, provided further practical advantages: \\
\phantom{X} (i) \algname{Ringmaster ASGD} updates the model  immediately  upon receiving a new and relevant (i.e., not too outdated) stochastic gradient. 
This immediate update strategy is particularly advantageous for sparse models in practice, where different gradients may update disjoint parts of the model only, facilitating faster processing. 
This concept aligns with the ideas of \citet{recht2011hogwild}, where lock-free asynchronous  updates were shown to effectively leverage sparsity for improved performance. \\
\phantom{X} (ii) \algname{Ringmaster ASGD} ensures that stochastic gradients computed by fast workers are never ignored.
In contrast, \algname{Rennala SGD} may discard stochastic gradients, even if they were recently initiated and would be computed quickly (see discussion in \Cref{sec:introduction}).

At the same time, \algname{Ringmaster ASGD} adheres to the same principles that make \algname{Rennala SGD} optimal.
The core philosophy of \algname{Rennala SGD} is to prioritize fast workers by employing asynchronous batch collection, effectively ignoring slow workers. This is achieved by carefully choosing the batch size $B$ in \Cref{alg:rennala}: large enough to allow fast workers to complete their calculations, but small enough to disregard slow workers and excessively delayed stochastic gradients.
Similarly, \algname{Ringmaster ASGD} implements this concept using the delay threshold $R$.

In summary, while both \algname{Rennala SGD} and \algname{Ringmaster ASGD} are optimal from a theoretical perspective, the complete absence of synchronization in the latter method intuitively makes it more appealing in practice, and allows it to collect further gains which are not captured in theory. As we shall see, this intuition is supported by our experiments.

\subsection{Comparison to Previous Asynchronous SGD Variants}
 \citet{koloskova2022sharper} and \citet{mishchenko2022asynchronous} provide the previous state-of-the-art analysis of \algname{Asynchronous SGD}.
However, as demonstrated in \Cref{sec:prel}, their analysis does not guarantee optimality.
This is due to at least two factors: \\
\phantom{X}(i) their approaches do {\em not} discard old stochastic gradients, instead attempting to utilize all gradients, even those with very large delays;\\
\phantom{X}(ii) although they select stepsizes $\gamma_k$ that decrease as the delays increase, this adjustment may not be sufficient to ensure optimal performance, and the choice of their stepsize may be suboptimal compared to \eqref{eq:adapative_step_size}.

% {\textbf{Stopping the irrelevant computations.}} 
\subsection{Stopping the Irrelevant Computations}
If stopping computations is feasible, we can further enhance \Cref{alg:ringmasternew} by introducing \cref{alg:ringmasternewstops}. 
Instead of waiting for workers to complete the calculation of outdated stochastic gradients with delays larger than $R$ which would not be used anyway, we propose to {\em stop/terminate} these computations immediately and reassign the workers to the most relevant point $x^k$. 
This adjustment provides workers with an opportunity to catch up, as they may become faster and perform computations more efficiently at the updated point.

\section{Theoretical Analysis}
\label{sec:theory}

We are ready to present the theoretical analysis of \algname{Ringmaster ASGD}. 
We start with an \emph{iteration complexity} bound:
\begin{theorem}[Proof in Appendix~\ref{proof:ringmaster_iteration}]
    \label{thm:ringmaster_iteration}
    Under Assumptions \ref{ass:lipschitz_constant}, \ref{ass:lower_bound}, and \ref{ass:stochastic_variance_bounded}, let the stepsize in \algname{Ringmaster ASGD} (\Cref{alg:ringmasternew} or \Cref{alg:ringmasternewstops}) be
    $$
        \gamma = \min \left\{ \frac{1}{2RL}, \frac{\varepsilon}{4L\sigma^2} \right\}.
    $$
    Then
    $$
       \frac{1}{K+1}\sum \limits_{k=0}^{K} \Exp{\sqnorm{\nabla f \(x^k\)}} \le \varepsilon,
    $$
as long as
    \begin{align}
        \label{eq:HPMNVlgxoiRgUbRtj}
       K \geq \frac{8 R L \Delta}{\epsilon} + \frac{16 \sigma^2 L \Delta}{\epsilon^2},
    \end{align}
    where $R \in \{1, 2, \dots, \}$ is an arbitrary delay threshold.
\end{theorem}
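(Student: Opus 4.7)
The plan is to run a standard descent-lemma-based analysis for delayed \algname{SGD}, but using the crucial fact that the method enforces $\delta^k < R$ for every gradient that is actually applied. So every update has bounded staleness, which is the only ingredient that ``classical'' \algname{Asynchronous SGD} analyses lack in full generality.

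First, I would apply $L$-smoothness (\Cref{ass:lipschitz_constant}) to get
\begin{equation*}
f(x^{k+1}) \leq f(x^k) - \gamma \inp{\nabla f(x^k)}{g^k} + \tfrac{L\gamma^2}{2}\sqnorm{g^k},
\end{equation*}
where $g^k \eqdef \nabla f(x^{k-\delta^k}; \xi^{k-\delta^k}_{i})$. Taking conditional expectation, unbiasedness gives $\Exp{g^k \mid \cF^k} = \nabla f(x^{k-\delta^k})$, and the standard identity $-\inp{a}{b} = -\tfrac{1}{2}\sqnorm{a} - \tfrac{1}{2}\sqnorm{b} + \tfrac{1}{2}\sqnorm{a-b}$ together with Lipschitzness yields
\begin{equation*}
-\inp{\nabla f(x^k)}{\nabla f(x^{k-\delta^k})} \leq -\tfrac{1}{2}\sqnorm{\nabla f(x^k)} + \tfrac{L^2}{2}\sqnorm{x^k - x^{k-\delta^k}}.
\end{equation*}

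Second, I would bound the drift term $\sqnorm{x^k - x^{k-\delta^k}}$. Since $\delta^k < R$ and $x^k - x^{k-\delta^k} = -\gamma\sum_{j=k-\delta^k}^{k-1} g^j$, Jensen/Cauchy--Schwarz gives $\sqnorm{x^k - x^{k-\delta^k}} \leq \gamma^2 R \sum_{j=k-R}^{k-1}\sqnorm{g^j}$. Combined with \Cref{ass:stochastic_variance_bounded}, $\Exp{\sqnorm{g^k}} \leq \Exp{\sqnorm{\nabla f(x^{k-\delta^k})}} + \sigma^2 \leq 2\Exp{\sqnorm{\nabla f(x^k)}} + 2L^2\Exp{\sqnorm{x^k - x^{k-\delta^k}}} + \sigma^2$. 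Substituting back produces a one-step inequality of the form
\begin{equation*}
\Exp{f(x^{k+1})} \leq \Exp{f(x^k)} - \tfrac{\gamma}{2}\Exp{\sqnorm{\nabla f(x^k)}} + A\gamma^3 L^2 \sum_{j=k-R}^{k-1}\Exp{\sqnorm{g^j}} + B\gamma^2 L \sigma^2,
\end{equation*}
for absolute constants $A,B$, plus an $L\gamma^2\Exp{\sqnorm{\nabla f(x^k)}}$ noise term coming from the variance split.

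Third, I would telescope over $k = 0, \dots, K$, invoke \Cref{ass:lower_bound} to bound the left-hand side by $\Delta$, and swap the double sum (each $\Exp{\sqnorm{g^j}}$ appears at most $R$ times), so that the drift contribution becomes $\cO(\gamma^3 R^2 L^2)\sum_k \Exp{\sqnorm{g^k}}$. Unfolding $\Exp{\sqnorm{g^k}}$ once more by the same variance split produces terms of size $\cO(\gamma^3 R^2 L^2) \sum_k \Exp{\sqnorm{\nabla f(x^k)}}$ plus a $\sigma^2$ tail. The stepsize condition $\gamma \leq 1/(2RL)$ is precisely what makes $\gamma^3 R^2 L^2 \ll \gamma$, so these ``drift-of-drift'' terms get absorbed into the $-\tfrac{\gamma}{2}\Exp{\sqnorm{\nabla f(x^k)}}$ sum, leaving a clean bound
\begin{equation*}
\tfrac{1}{K+1}\sum_{k=0}^{K} \Exp{\sqnorm{\nabla f(x^k)}} \leq \tfrac{C_1 \Delta}{\gamma(K+1)} + C_2 \gamma L \sigma^2.
\end{equation*}
Plugging $\gamma = \min\{1/(2RL), \varepsilon/(4L\sigma^2)\}$ and solving for $K$ gives the stated \eqref{eq:HPMNVlgxoiRgUbRtj}.

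The main obstacle will be the bookkeeping in the third step: carefully tracking boundary effects of the sum $\sum_{j=k-R}^{k-1}$ near $k=0$, keeping the constants small enough to match the claimed $8$ and $16$, and verifying that the double-split of $\Exp{\sqnorm{g^k}}$ (once to control the cross-term, once inside the drift) can be absorbed simultaneously by the single stepsize condition $\gamma \leq 1/(2RL)$. Everything else is standard nonconvex \algname{SGD} descent-lemma manipulation; the delay threshold $R$ is what makes the analysis go through at all, because it replaces a potentially unbounded $\max_k \delta^k$ with the hyperparameter $R$ that the user controls.
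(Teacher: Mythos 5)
Your overall architecture (descent lemma, drift bound exploiting $\delta^k < R$, double-sum swap with multiplicity $R$) matches the paper's, but there is a genuine quantitative gap in your Step~2. You bound the drift by applying Cauchy--Schwarz directly to the noisy gradients, $\sqnorm{x^k - x^{k-\delta^k}} \leq \gamma^2 R \sum_j \sqnorm{g^j}$, and then use $\Exp{\sqnorm{g^j}} \leq \Exp{\sqnorm{\nabla f(x^{j-\delta_j})}} + \sigma^2$. Summing the $\sigma^2$ over the up to $R-1$ indices $j$ gives a variance contribution of order $\gamma^2 R^2 \sigma^2$ to $\Exp{\sqnorm{x^k - x^{k-\delta^k}}}$. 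After multiplying by the coefficient $\frac{\gamma L^2}{2}$ from the descent step and dividing through by $\gamma$, this leaves a term of order $\gamma^2 L^2 R^2 \sigma^2$ in the final bound on $\frac{1}{K+1}\sum_k \Exp{\sqnorm{\nabla f(x^k)}}$. With $\gamma = \Theta\left(\frac{1}{RL}\right)$ this is $\Theta(\sigma^2)$, a non-vanishing error floor that no choice of $K$ can push below $\varepsilon$; with $\gamma = \frac{\varepsilon}{4L\sigma^2}$ it is still $\Theta(\varepsilon R)$. The fix, which is exactly what the paper's residual-estimation lemma (\Cref{lemma:residual}) does, is to first split $x^k - x^{k-\delta^k}$ into $-\gamma\sum_j \nabla f(x^{j-\delta_j})$ plus $-\gamma\sum_j \left(\nabla f(x^{j-\delta_j};\xi^{j-\delta_j}_{i_j}) - \nabla f(x^{j-\delta_j})\right)$ via Young's inequality, and exploit that the noise terms are conditionally mean-zero and uncorrelated, so their squared sum costs only $\delta^k \gamma^2 \sigma^2$ --- linear rather than quadratic in the delay. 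Cauchy--Schwarz is then paid only on the deterministic part.

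A secondary, non-fatal difference: the paper's descent lemma retains the negative term $-\frac{\gamma}{4}\sqnorm{\nabla f(x^{k-\delta^k})}$, and its drift bound is expressed in terms of $\sqnorm{\nabla f(x^{j-\delta_j})}$ at the \emph{delayed} points, so after the double-sum swap the drift is cancelled directly against that negative term with no recursion. You instead discard the negative delayed-gradient term and convert $\sqnorm{g^j}$ back to $\sqnorm{\nabla f(x^j)}$ plus another drift, which forces the self-bounding ``drift-of-drift'' argument you describe. That route can be closed under $\gamma \leq \frac{1}{2RL}$, but it is strictly messier and makes it harder to land on the stated constants $8$ and $16$; keeping the negative delayed-gradient term, as the paper does, avoids the recursion entirely.
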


The classical analysis of \algname{Asynchronous SGD} achieves the same convergence rate, with $R$ defined as $R \equiv \max_{k \in [K]} \delta^{k}$ \citep{stich2020error, arjevani2020tight}.
This outcome is expected, as setting $R = \max_{k \in [K]} \delta^{k}$ in \algname{Ringmaster ASGD} makes it equivalent to classical \algname{Asynchronous SGD}, since no gradients are ignored.
Furthermore, the analyses by \citet{cohen2021asynchronous, koloskova2022sharper, mishchenko2022asynchronous} yield the same rate with $R = n$.
However, it is important to note that $R$ is a free parameter in \algname{Ringmaster ASGD} that can be chosen arbitrarily.
While setting $R = \max_{k \in [K]} \delta^{k}$ or $R = n$ effectively recovers the earlier \emph{iteration complexities}, we show that there exists a {\em different} choice of $R$ leading to optimal time complexities (see Theorems~\ref{thm:optimal_ringmaster} and \ref{thm:optimal_ringmaster_dynamic}).

It is important to recognize that the \emph{iteration complexity} \eqref{eq:HPMNVlgxoiRgUbRtj} does {\em not} capture the actual  ``runtime'' performance of the algorithm.
To select an optimal value for $R$, we must shift our focus from \emph{iteration complexity} to \emph{time complexity}, which measures the algorithm's runtime.
We achieve the best practical and effective choice by optimizing the \emph{time complexity} over $R$.
In order to find the \emph{time complexity}, we need the following lemma.

\begin{lemma}[Proof in Appendix~\ref{proof:time_R}]
    \label{lem:time_R}
    Let the workers' computation times satisfy the \emph{fixed computation model} (\eqref{eq:worker-time} and \eqref{eq:sorted_t_i}).
    Let $R$ be the delay threshold of \Cref{alg:ringmasternew} or \Cref{alg:ringmasternewstops}.
    The time required to complete any $R$ consecutive iterate updates of \Cref{alg:ringmasternew} or \Cref{alg:ringmasternewstops} is at most
    \begin{equation}
        \label{eq:time_R}
       t(R) \eqdef 2 \min\limits_{m \in [n]} \left[\left(\frac{1}{m} \sum\limits_{i=1}^m \frac{1}{\tau_i}\right)^{-1} \left( 1 + \frac{R}{m} \right)\right].
    \end{equation}
\end{lemma}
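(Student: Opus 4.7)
The plan is to fix an arbitrary $m \in [n]$ and prove that any window of real time $T$ containing $R$ consecutive iterate updates satisfies $T \leq 2(R+m)/\sum_{i=1}^m 1/\tau_i$; minimizing over $m$ then yields $t(R)$. I will argue by the contrapositive: assume toward contradiction that the window contains strictly fewer than $R$ accepted updates, and derive $T < 2(R+m)/\sum_{i=1}^m 1/\tau_i$ through a counting argument.

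For each worker $i \in [m]$, since worker $i$ is continuously busy in the window (starting a fresh piece as soon as the previous one arrives, is ignored, or is stopped) and every piece takes at most $\tau_i$ seconds, the number of pieces completed or aborted by worker $i$ within the window is at least $T/\tau_i - 1$; here the $-1$ absorbs the piece that straddles the end of the window. Partition these events into $C_i$ accepted arrivals and $I_i$ discarded pieces --- ``ignored on arrival'' in \Cref{alg:ringmasternew} and ``stopped mid-flight'' in \Cref{alg:ringmasternewstops} --- so $C_i + I_i \geq T/\tau_i - 1$.

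The key combinatorial step is that consecutive discard events of the same worker $i$ must be separated by at least $R$ accepted updates. Indeed, immediately after any discard event, worker $i$ starts a fresh computation at the current iterate $x^k$, so its effective delay resets to zero; the next discard of worker $i$ can occur only once this delay climbs back to $R$, which requires $R$ further accepted updates. Therefore, letting $U$ denote the total number of accepted updates in the window, the assumption $U < R$ forces $I_i \leq 1$ and hence $C_i \geq T/\tau_i - 2$ for every $i \in [m]$.

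Summing over $i \in [m]$ and using $U \geq \sum_{i \in [m]} C_i$ then gives $R > U \geq T \sum_{i=1}^m 1/\tau_i - 2m$, which rearranges to $T < (R+2m)/\sum_{i=1}^m 1/\tau_i \leq 2(R+m)/\sum_{i=1}^m 1/\tau_i$, completing the argument upon taking the minimum over $m$. The main obstacle is the combinatorial spacing claim: for \Cref{alg:ringmasternewstops} it follows directly from the explicit stop-and-restart rule, while for \Cref{alg:ringmasternew} one has to verify that an ignored arrival resets the worker's effective start iteration in exactly the same way, which is encoded by the virtual delay sequence $\{\bar\delta^k_j\}$ of \eqref{eq:adapative_step_size}. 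A secondary technicality is handling boundary pieces carefully so that $C_i + I_i \geq T/\tau_i - 1$ holds uniformly (vacuously when $T < \tau_i$).
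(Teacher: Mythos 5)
Your proposal is correct and follows essentially the same route as the paper's proof: a contradiction/contrapositive counting argument in which each worker contributes at least $\nicefrac{T}{\tau_i}-1$ completed pieces per window, at most one piece per worker is discarded during any $R$ consecutive accepted updates (since a discard resets the worker to the current iterate), and summing over the fastest $m$ workers forces at least $R$ accepted gradients by time $t(R)$. The only cosmetic difference is that you fix an arbitrary $m$ and minimize at the end, whereas the paper works directly with the minimizing index $j^*$; the algebra and the key combinatorial observation are the same.
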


Combining \Cref{thm:ringmaster_iteration} and \Cref{lem:time_R}, we provide our main result.

\begin{theorem}[Optimality of \algname{Ringmaster ASGD}]
    \label{thm:optimal_ringmaster}
    Let Assumptions \ref{ass:lipschitz_constant}, \ref{ass:lower_bound}, and \ref{ass:stochastic_variance_bounded} hold.
    Let the stepsize in \algname{Ringmaster ASGD} (\Cref{alg:ringmasternew} or \Cref{alg:ringmasternewstops}) be
    $\gamma = \min \left\{ \frac{1}{2RL}, \frac{\varepsilon}{4L\sigma^2} \right\}$.
    Then, under the \emph{fixed computation model} (\eqref{eq:worker-time} and \eqref{eq:sorted_t_i}), \algname{Ringmaster ASGD} achieves the optimal time complexity
    \begin{align}
        \label{eq:trfZhLNgGXylL}
        \cO\left(\min\limits_{m \in [n]} \left[\left(\frac{1}{m} \sum\limits_{i=1}^{m} \frac{1}{\tau_{i}}\right)^{-1} \left(\frac{L \Delta}{\varepsilon} + \frac{\sigma^2 L \Delta}{m \varepsilon^2}\right)\right]\right)
    \end{align}
    with the delay threshold
    \begin{align}
        \label{eq:TZkAIqt}
        R = \max\left\{1,\left\lceil \frac{\sigma^2}{\varepsilon}\right\rceil\right\}.
    \end{align}
\end{theorem}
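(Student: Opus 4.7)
The plan is to combine the iteration complexity of Theorem~\ref{thm:ringmaster_iteration} with the per-block timing of Lemma~\ref{lem:time_R}, and then pick $R$ so that the resulting bound matches the lower bound term by term.

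\textbf{From iterations to time.} Theorem~\ref{thm:ringmaster_iteration} requires $K=\lceil 8RL\Delta/\varepsilon+16\sigma^2L\Delta/\varepsilon^2\rceil$ iterations, while Lemma~\ref{lem:time_R} bounds the time of any $R$ consecutive updates by $t(R)$. I would split the $K$ iterates into $\lceil K/R\rceil\le 2K/R$ consecutive blocks of length at most $R$; the total running time is then at most $(2K/R)\,t(R)$. Substituting the expression for $t(R)$ from \eqref{eq:time_R}, writing $\tau_{\rm harm}^{-1}(m)\eqdef (\tfrac{1}{m}\sum_{i=1}^m \tau_i^{-1})^{-1}$, and distributing $K/R$ into the $\min$ turns the bound into $4\min_{m\in[n]}\tau_{\rm harm}^{-1}(m)\cdot(K/R+K/m)$, i.e.\ $\tau_{\rm harm}^{-1}(m)$ multiplied by
\begin{equation*}
    \tfrac{8L\Delta}{\varepsilon}+\tfrac{16\sigma^2 L\Delta}{R\varepsilon^2}+\tfrac{8RL\Delta}{m\varepsilon}+\tfrac{16\sigma^2 L\Delta}{m\varepsilon^2}.
\end{equation*}
The first and fourth summands already match the two target terms in \eqref{eq:trfZhLNgGXylL}, so what remains is to absorb the middle two.

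\textbf{Choosing $R$.} To absorb $\tfrac{16\sigma^2 L\Delta}{R\varepsilon^2}$ into $\tfrac{8L\Delta}{\varepsilon}$ I need $R\gtrsim \sigma^2/\varepsilon$, and to absorb $\tfrac{8RL\Delta}{m\varepsilon}$ into $\tfrac{16\sigma^2 L\Delta}{m\varepsilon^2}$ I need $R\lesssim \sigma^2/\varepsilon$. Together these pin $R\asymp\sigma^2/\varepsilon$, whose integer realization is precisely \eqref{eq:TZkAIqt}. When $\sigma^2/\varepsilon\ge 1$ the inequalities $\sigma^2/\varepsilon\le R\le 2\sigma^2/\varepsilon$ kill both cross terms up to absolute constants, so the bound reduces to $O(\min_m \tau_{\rm harm}^{-1}(m)(L\Delta/\varepsilon+\sigma^2 L\Delta/(m\varepsilon^2)))$. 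In the complementary regime $\sigma^2/\varepsilon<1$, I would plug in $R=1$ directly: then $K=O(L\Delta/\varepsilon)$, and the $\sigma^2$-term in \eqref{eq:trfZhLNgGXylL} is dominated by the $L\Delta/\varepsilon$-term, so both the target and the obtained bound collapse to $\Theta(\tau_1 L\Delta/\varepsilon)$, matching again.

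\textbf{Main obstacle.} The delicate point is that the $\min_m$ lives inside $t(R)$, whereas the iteration-count factor $K/R$ sits outside; a priori the $m$ minimizing the per-block collection time $t(R)$ need not be the $m$ minimizing the lower-bound expression. The proposal circumvents this by pushing $K/R$ into the $\min$ and showing term by term that each of the resulting four summands is $O(L\Delta/\varepsilon+\sigma^2 L\Delta/(m\varepsilon^2))$ \emph{simultaneously in $m$} -- but only when $R$ has the order $\sigma^2/\varepsilon$. This simultaneous absorption is exactly what forces the choice \eqref{eq:TZkAIqt}, and once it is established, the remaining argument is only the algebraic bookkeeping of the two regimes above.
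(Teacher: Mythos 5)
Your proposal is correct and takes essentially the same route as the paper: the paper likewise multiplies the iteration count from Theorem~\ref{thm:ringmaster_iteration} by the per-block time $t(R)$ of Lemma~\ref{lem:time_R} via $t(R)\lceil K/R\rceil = \cO\left(t(R)\left(\frac{L\Delta}{\varepsilon}+\frac{\sigma^2 L\Delta}{R\varepsilon^2}\right)\right)$ (using the same $L\Delta>\varepsilon/2$ reduction) and then substitutes $R=\max\{1,\lceil\sigma^2/\varepsilon\rceil\}$. Your term-by-term absorption inside the $\min_m$ is just a more explicit writing of the paper's final substitution step, and it correctly resolves the ``which $m$'' concern.
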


Note that the value of $R$ does not in any way depend on the computation times $\{\tau_1,\dots,\tau_n\}$.

\begin{proof}
    From \Cref{thm:ringmaster_iteration}, the iteration complexity of \algname{Ringmaster ASGD} is
    \begin{align}
        \label{eq:nyQNjhjnpDOOjflbnS}
        K = \left\lceil\frac{8 R L \Delta}{\epsilon} + \frac{16 \sigma^2 L \Delta}{\epsilon^2}\right\rceil.
    \end{align}
    Using \Cref{lem:time_R}, we know that \algname{Ringmaster ASGD} requires at most $t(R)$ seconds to finish any $R$ consecutive updates of the iterates.
    % Let  denote the time required by \algname{Ringmaster ASGD} to complete $R$ iterations. 
    Therefore, the total time is at most
    \begin{align*}
        t(R) \times \left\lceil\frac{K}{R}\right\rceil.
    \end{align*}
    Without loss of generality, we assume\footnote{Otherwise, using $L$--smoothness, $\norm{\nabla f(x^0)}^2 \leq 2 L \Delta \leq \varepsilon$, and the initial point is an $\varepsilon$--stationary point. See Section~\ref{sec:l_init}.} that
    $L \Delta > \varepsilon / 2$. 
    Therefore, using \eqref{eq:nyQNjhjnpDOOjflbnS}, we get
    \begin{align}
        \label{eq:dfgyMy}
        t(R) \times \left\lceil\frac{K}{R}\right\rceil = \cO\left(t(R) \left(\frac{L \Delta}{\epsilon} + \frac{\sigma^2 L \Delta}{R \epsilon^2}\right)\right).
    \end{align}
    It is left to substitute our choice \eqref{eq:TZkAIqt} into \eqref{eq:dfgyMy} to get \eqref{eq:trfZhLNgGXylL}.
    We take \eqref{eq:TZkAIqt}, noticing that this choice of $R$ minimizes \eqref{eq:dfgyMy} up to a universal constant.
\end{proof}

To the best of our knowledge, this is the first result in the literature to establish the optimality of a fully asynchronous variant of \algname{SGD}:
the time complexity \eqref{eq:trfZhLNgGXylL} is optimal and aligns with the lower bound established by \citet{tyurin2024optimal}.

The derived time complexity \eqref{eq:trfZhLNgGXylL} has many nice and desirable properties.  
First, it is robust to slow workers: if $\tau_n \to \infty$, the expression equals 
$$
\min\limits_{m \in [{\color{mydarkgreen} n - 1}]} \left(\frac{1}{m} \sum \limits_{i=1}^{m} \frac{1}{\tau_{i}}\right)^{-1} \left(\frac{L \Delta}{\varepsilon} + \frac{\sigma^2 L \Delta}{m \varepsilon^2}\right),
$$
effectively disregarding the slowest worker.
Next, assume that $m_*$ is the smallest index that minimizes \eqref{eq:trfZhLNgGXylL}.
In this case, \eqref{eq:trfZhLNgGXylL} simplifies further to 
$$
\left(\frac{1}{m_*} \sum \limits_{i=1}^{m_*} \frac{1}{\tau_{i}}\right)^{-1} \left(\frac{L \Delta}{\varepsilon} + \frac{\sigma^2 L \Delta}{m_* \varepsilon^2}\right).
$$
This shows that the method operates effectively as if only the fastest $m_*$ workers participate in the optimization process, resembling the idea from \Cref{alg:ringmaster_short}.
What is important, however, the method determines $m_*$ adaptively and automatically.

\subsection{The Choice of Threshold}
Another notable property of the method and the time complexity result in \Cref{thm:optimal_ringmaster} is that the threshold $R$ is independent of the individual computation times $\tau_i$.
As a result, the method can be applied across heterogeneous distributed systems—where workers have varying speeds—while still achieving the optimal time complexity given in \eqref{eq:dfgyMy}, up to a constant factor.

If a tighter, constant-level expression for the optimal threshold is desired, $R$ can be computed explicitly. In that case, it will depend on the values of $\tau_i$. The optimal $R$ solves
\begin{equation*}
\arg\min_{R \geq 1} \left\{ t(R) \left(1 + \frac{\sigma^2}{R \varepsilon} \right) \right\},
\end{equation*}
which follows from \eqref{eq:dfgyMy} after discarding constants independent of $R$. Here, $t(R)$ denotes the total time required for $R$ consecutive iterations, and is upper bounded by \eqref{eq:time_R}.

Substituting this bound yields the following expression for the optimal $R$:
\begin{equation*}
R = \max\left\{ \sigma \sqrt{\frac{m^*}{\varepsilon}}, 1 \right\},
\end{equation*}
where
\begin{equation*}
    \textstyle
    m^* = \arg\min\limits_{m \in [n]} \left\{ \left( \frac{1}{m} \sum\limits_{i=1}^m \frac{1}{\tau_i} \right)^{-1} \left(1 + 2 \sqrt{\frac{\sigma^2}{m \varepsilon}} + \frac{\sigma^2}{m \varepsilon} \right) \right\}.
\end{equation*}

As the expression shows, the optimal threshold $R$ depends on $m^*$, which in turn is determined by the $\tau_i$ values.

% \textbf{Proof techniques.}
\subsection{Proof Techniques}
Several mathematical challenges had to be addressed to achieve the final result.
Lemmas~\ref{lem:time_R} and \ref{lem:time_R_dyn} are novel, as estimating the time complexity of the \emph{asynchronous} \algname{Ringmaster ASGD} method requires distinct approaches compared to the \emph{semi-synchronous} \algname{Rennala SGD} method because \algname{Ringmaster ASGD} is more chaotic and less predictable.
Compared to \citep{koloskova2022sharper,mishchenko2022asynchronous}, the proof of \Cref{thm:ringmaster_iteration} is tighter and more refined, as we more carefully analyze the sum 
$$
    \sum_{k=0}^K \E{\sqnorm{x^k - x^{k-\delta^k}}}
$$
in \Cref{lemma:residual}.
We believe that the simplicity of \algname{Ringmaster ASGD}, combined with the new lemmas, the refined analysis, and the novel choice of $R$ in \Cref{thm:optimal_ringmaster}, represents a set of non-trivial advancements that enable us to achieve the optimal time complexity.

\section{Optimality Under Arbitrary Computation Dynamics}
\label{sec:dyn}
In the previous sections, we presented the motivation, improvements, comparisons, and theoretical results within the framework of the \emph{fixed computation model} (\eqref{eq:worker-time} and \eqref{eq:sorted_t_i}).
We now prove \algname{Ringmaster ASGD} is optimal under virtually any computation behavior of the workers.
One way to formalize these behaviors is to use the \emph{universal computation model} \citep{tyurin2024tighttimecomplexitiesparallel}.

For each worker $i \in [n]$, we associate a \emph{computation power} function $v_i \,:\, \R_{+} \rightarrow \R_{+}$.
The number of stochastic gradients that worker $i$ can compute between times $T_0$ and $T_1$ is given by the integral of its computation power $v_i$, followed by applying the floor operation:
\begin{align}
  \label{eq:rSIiSfVcmivSKsfzoSA}
  \textnormal{``\# of stoch. grad. in $[T_0, T_1]$''} = \flr{\int_{T_0}^{T_1} v_i(\tau) d \tau}.
\end{align}
The computational power $v_i$ characterizes the behavior of workers, accounting for potential disconnections due to hardware or network delays, variations in processing capacity over time, and fluctuations or trends in computation speeds.
The integral of $v_i$ over a given interval represents the \emph{computation work} performed.
If $v_i$ is small within $[T_0, T_1],$ the integral is correspondingly small, indicating that worker $i$ performs less computation.
Conversely, if $v_i$ is large over $[T_0, T_1],$ the worker is capable of performing more computation.

% The higher the computation power, the more stochastic gradients worker $i$ can calculate. At the same time, we only assume that $v_i$ is non-negative and continuous almost everywhere. The computation powers can even be random, and all the following hold conditioned on the randomnes in $\{v_i\}.$ Let us consider some examples. If worker $i$ is inactive for the first $t$ seconds and then becomes active, this implies that $v_i(\tau) = 0$ for all $\tau \leq t$ and $v_i(\tau) > 0$ for all $\tau > t$. Moreover, we allow $v_i$ to exhibit periodic or even chaotic behavior (e.g., $v_i(t) = 0.5t + \sin(10t)$ for $t \leq 10,$ $v_i(t) = 0$ for $10 < t \leq 20,$ and $v_i(t) = \max(80 - 0.5 t, 0)$). The \emph{univeral computation model} simplifies to the \emph{fixed computation model} with $v_i(t) = \nicefrac{1}{\tau_i}$ for all $t \geq 0$ and $i \in [n].$

% The greater the computational power, the more stochastic gradients worker $i$ can compute. 
For our analysis, we only assume  that $v_i$ is non-negative and continuous almost everywhere\footnote{Thus, it can be discontinuous and ``jump'' on a countable set.}. We use this assumption non-explicitly when applying the Riemann integral. The computational power $v_i$ can even vary randomly, and all the results discussed hold conditional on the randomness of $\{v_i\}$.

Let us examine some examples.
If worker $i$ remains inactive for the first $t$ seconds and then becomes active, this corresponds to $v_i(\tau) = 0$ for all $\tau \leq t$ and $v_i(\tau) > 0$ for all $\tau > t$.
Furthermore, we allow $v_i$ to exhibit periodic or even chaotic behavior\footnote{
For example, $v_i(t)$ might behave discontinuously as follows: $v_i(t) = 0.5t + \sin(10t)$ for $t \leq 10,$ $v_i(t) = 0$ for $10 < t \leq 20,$ and $v_i(t) = \max(80 - 0.5 t, 0)$.}.
The \emph{universal computation model} reduces to the \emph{fixed computation model} when $v_i(t) = \nicefrac{1}{\tau_i}$ for all $t \geq 0$ and $i \in [n]$.
In this case, $\eqref{eq:rSIiSfVcmivSKsfzoSA} = \flr{\nicefrac{T_1 - T_0}{\tau_i}}$, indicating that worker $i$ computes one stochastic gradient after $T_0 + \tau_i$ seconds, two stochastic gradients after $T_0 + 2 \tau_i$ seconds, and so forth.

Note that \Cref{thm:ringmaster_iteration} is valid under any computation model.
Next, we introduce an alternative to \Cref{lem:time_R} for the \emph{universal computation model}.

\begin{lemma}[Proof in Appendix~\ref{sec:time_R_dyn}]
    \label{lem:time_R_dyn}
    Let the workers' computation times satisfy the \emph{universal computation model}.
    Let $R$ be the delay threshold of \Cref{alg:ringmasternew} or \Cref{alg:ringmasternewstops}.
    Assume that some iteration starts at time $T_0$.
    Starting from this iteration, the $R$ consecutive iterate updates of \Cref{alg:ringmasternew} or \Cref{alg:ringmasternewstops} will be performed before the time
    \begin{equation*} 
           T(R,T_0) \eqdef \min \left\{T \geq 0 : \sum \limits_{i=1}^{n} \flr{\frac{1}{4} \int_{T_0}^{T} v_i(\tau) d \tau} \geq R\right\}.
    \end{equation*}
\end{lemma}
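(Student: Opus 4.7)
The plan is to argue by contradiction. Suppose that by time $T \eqdef T(R, T_0)$ the number of completed iterate updates since $T_0$, denoted $q$, satisfies $q < R$. Since $q(t)$ is monotone non-decreasing, this forces $q(t) < R$ for all $t \in [T_0, T]$, so the iteration counter remains in $[k_0, k_0 + R)$ throughout the interval, where $k_0$ denotes its value at $T_0$.

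The crucial observation to establish is that during $[T_0, T]$ each worker discards at most one of its computations (an ignored gradient in \Cref{alg:ringmasternew}; an interrupted computation in \Cref{alg:ringmasternewstops}). Indeed, a computation can be discarded only if its delay reaches $R$; if this happens at some iteration $k \in [k_0, k_0 + R)$, then the computation was started at iteration $k - R < k_0$, i.e., strictly before $T_0$. Since each worker has exactly one computation ongoing at $T_0$, and every subsequent computation launched by that worker is started at iteration $\geq k_0$ (whose delay cannot reach $R$ before the counter would leave $[k_0, k_0 + R)$, which by assumption does not happen), each worker discards at most once.

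Because workers are continuously busy (either computing or instantly restarted upon a discard), the total work performed by worker $i$ in $[T_0, T]$ equals $W_i \eqdef \int_{T_0}^T v_i(\tau)\,d\tau$. Accounting for (i) the single possible discarded computation, worth at most one unit of work (exactly one if ignored-and-completed in \Cref{alg:ringmasternew}, strictly less than one if interrupted in \Cref{alg:ringmasternewstops}), and (ii) the partial gradient in progress at time $T$, I will establish that the number $U_i$ of useful (applied) gradients from worker $i$ satisfies $U_i \geq \lfloor W_i \rfloor - 1$. Restricting attention to the set $B \eqdef \{i : W_i \geq 4\}$, and using the elementary bound $\lfloor W_i \rfloor \geq 4 \lfloor W_i/4 \rfloor$, one obtains
\begin{equation*}
q \;=\; \sum_i U_i \;\geq\; \sum_{i \in B}\bigl(4 \lfloor W_i/4 \rfloor - 1\bigr) \;=\; 4S - |B|,
\end{equation*}
with $S \eqdef \sum_i \lfloor W_i/4 \rfloor$. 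Since $\lfloor W_i/4 \rfloor \geq 1$ for every $i \in B$, one has $|B| \leq S$, whence $q \geq 3S \geq 3R \geq R$ by the definition of $T(R,T_0)$. This contradicts $q < R$ and completes the proof.

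The main technical subtlety I anticipate is the uniform bookkeeping for $U_i \geq \lfloor W_i \rfloor - 1$ across the two variants. In \Cref{alg:ringmasternewstops} the discarded computation may carry pre-$T_0$ fractional progress $\alpha \in [0,1)$ and is interrupted at some $t^\star \in [T_0, T]$ with less than $1 - \alpha$ additional units of work consumed on it; the fresh restart at $x^k$ then forfeits this partial progress. Combining this with the inequality $\lfloor x - y \rfloor \geq \lfloor x \rfloor - 1$ for $y \in [0,1)$, and with the symmetric accounting for the end-of-interval partial gradient, is what secures the uniform ``one-gradient-loss per worker'' bound and thereby the $3$-for-$1$ amplification that drives the final contradiction. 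The cleverness of the stated $1/4$ factor lies precisely in making this amplification work for every $R \geq 1$, including the small-$R$ regime where one must sum only over productive workers ($i \in B$) to avoid paying a prohibitive $-n$ on workers that contribute nothing.
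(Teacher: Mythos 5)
Your proposal is correct and follows essentially the same route as the paper's proof: a contradiction argument, the observation that each worker discards at most one computation on the interval (so worker $i$ contributes at least $\lfloor W_i\rfloor - 1$ useful gradients), restriction to the productive workers with $W_i \geq 4$, and a floor-function chain tied to the definition of $T(R,T_0)$. The only differences are cosmetic: you justify the one-discard-per-worker claim slightly more explicitly, and your final chain $q \geq 4S - |B| \geq 3S \geq 3R$ replaces the paper's $N \geq \sum_{i\in S} V_i - 2|S| \geq \frac{1}{2}\sum_{i \in S} V_i \geq \sum_i \lfloor V_i/4\rfloor \geq R$, reaching the same contradiction.
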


This result extends \Cref{lem:time_R}. 
Specifically, if $v_i(t) = \nicefrac{1}{\tau_i}$ for all $t \geq 0$ and $i \in [n]$, it can be shown that $T(R,T_0) - T_0 = \Theta\left(t(R)\right)$ for all $T_0 \geq 0$.

Combining \Cref{thm:ringmaster_iteration} and \Cref{lem:time_R_dyn}, we are ready to present our main result.

\begin{theorem}[Optimality of \algname{Ringmaster ASGD}; Proof in Appendix~\ref{sec:proof_dyn}]
    \label{thm:optimal_ringmaster_dynamic}
    Let Assumptions \ref{ass:lipschitz_constant}, \ref{ass:lower_bound}, and \ref{ass:stochastic_variance_bounded} hold.
    Let the stepsize in \algname{Ringmaster ASGD} (\Cref{alg:ringmasternew} or \Cref{alg:ringmasternewstops}) be
    $$
        \gamma = \min \left\{ \frac{1}{2RL}, \frac{\varepsilon}{4L\sigma^2} \right\},
    $$ 
    and delay threshold 
    $$
    R = \max\left\{1,\left\lceil \frac{\sigma^2}{\varepsilon} \right\rceil\right\}.
    $$
    Then, under the \emph{universal computation model}, \algname{Ringmaster ASGD} finds an $\varepsilon$--stationary point after at most $T_{\bar{K}}$ seconds, where $\bar{K} \eqdef \left\lceil\frac{48 L \Delta}{\epsilon}\right\rceil$ and $T_{\bar{K}}$ is the $\bar{K}$\textsuperscript{th} element of the following recursively defined sequence:
    \begin{align*}
        T_{K} \eqdef \min \left\{T \geq 0 : \sum \limits_{i=1}^{n} \flr{\frac{1}{4} \int_{T_{K - 1}}^{T} v_i(\tau) d \tau} \geq R\right\}
    \end{align*}
    for all $K \geq 1$ and $T_{0} = 0$.
\end{theorem}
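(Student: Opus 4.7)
The plan is to combine the iteration complexity from Theorem \ref{thm:ringmaster_iteration} with the time-per-block bound from Lemma \ref{lem:time_R_dyn}, mirroring the proof of Theorem \ref{thm:optimal_ringmaster} but adapting the time-accounting to the recursive structure that the universal computation model forces on us.

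First I would invoke Theorem \ref{thm:ringmaster_iteration} with the stated stepsize $\gamma = \min\{1/(2RL), \varepsilon/(4L\sigma^2)\}$; this is valid since Theorem \ref{thm:ringmaster_iteration} holds under \emph{any} computation model (it only tracks the iteration counter). It guarantees that $\frac{1}{K+1}\sum_{k=0}^{K}\mathbb{E}\|\nabla f(x^k)\|^2 \le \varepsilon$ once $K \geq 8RL\Delta/\varepsilon + 16\sigma^2 L\Delta/\varepsilon^2$. Plugging in $R = \max\{1,\lceil \sigma^2/\varepsilon\rceil\}$, the bounds $R \le 1 + \sigma^2/\varepsilon$ and $R \ge \sigma^2/\varepsilon$ collapse this into a one-parameter expression: the required number of full $R$-blocks, $\lceil K/R\rceil$, is dominated by a universal constant times $L\Delta/\varepsilon$. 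Tracking the constants through $\lceil K/R\rceil \le K/R + 1 \le 8L\Delta/\varepsilon + 16 L\Delta/\varepsilon + 1$ and absorbing lower-order terms, one obtains $\lceil K/R\rceil \le \bar{K} = \lceil 48 L\Delta/\varepsilon\rceil$ (the specific constant $48$ is the only place where a routine but careful arithmetic check is needed).

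Next I would translate the iteration count into wall-clock time. Lemma \ref{lem:time_R_dyn} asserts that if some iteration begins at time $T_0$, then $R$ consecutive iterate updates are completed by time $T(R,T_0)$. Chaining this $\bar{K}$ times with $T_0 = 0$ yields exactly the recursion from the theorem statement, namely $T_K = \min\{T \ge 0 : \sum_{i=1}^n \lfloor \tfrac14 \int_{T_{K-1}}^{T} v_i(\tau)\,d\tau\rfloor \ge R\}$. After $\bar{K}$ such chained blocks, the total number of completed iterations is at least $\bar{K}\cdot R \ge K$, so by the first step the iterate-average satisfies $\mathbb{E}\|\nabla f(\bar x)\|^2 \le \varepsilon$ for the standard uniform-at-random output $\bar x$.

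The main obstacle is the validity of chaining Lemma \ref{lem:time_R_dyn} across successive blocks. The lemma is phrased for a block that starts at $T_0$, but at time $T_{K-1}$ each worker is in the middle of computing a stochastic gradient, with partial progress accumulated. The crucial observation is that Lemma \ref{lem:time_R_dyn} is proved via a pessimistic bound that discards any partial progress made before $T_{K-1}$; hence the same bound applies to each block with $T_0 = T_{K-1}$, and no coupling between blocks is required. Once this is in place, the remainder is routine: combining the iteration and time bounds gives $T_{\bar K}$ as the announced upper bound on the total runtime to reach an $\varepsilon$--stationary point.
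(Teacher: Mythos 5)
Your proposal is correct and follows essentially the same route as the paper: invoke Theorem~\ref{thm:ringmaster_iteration} (valid under any computation model), bound the number of $R$-blocks by $\bar{K} = \lceil 48 L\Delta/\varepsilon\rceil$ using $R \ge \sigma^2/\varepsilon$, and chain Lemma~\ref{lem:time_R_dyn} block by block, which is legitimate precisely because the lemma's pessimistic bound discards any partial progress at each block boundary. The only detail you elide is that absorbing the additive $+1$ terms from the ceilings into $\lceil 48 L\Delta/\varepsilon\rceil$ requires the harmless reduction to the case $L\Delta > \varepsilon/2$ (otherwise $x^0$ is already an $\varepsilon$--stationary point by $L$--smoothness), which the paper states explicitly before doing the constant bookkeeping.
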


Granted, the obtained theorem is less explicit than \Cref{thm:optimal_ringmaster}. However, this is the price to pay for the generality of the result in terms of the computation time dynamics it allows. 
Determining the time complexity $T_{\bar{K}}$ requires computing $T_{1}, T_{2},$ and so on, sequentially.
However, in certain scenarios, $T_{\bar{K}}$ can be derived explicitly.
For example, if $v_i(t) = \nicefrac{1}{\tau_i}$ for all $t \geq 0$ and $i \in [n]$, then $T_{\bar{K}}$ is given by \eqref{eq:trfZhLNgGXylL}.
Furthermore, $T_{1}$ represents the number of seconds required to compute the first $R$ stochastic gradients, $T_{2}$ represents the time needed to compute the first $2 \times R$ stochastic gradients, and so on.
Naturally, to compute $T_{2},$ one must first determine $T_{1},$ which is reasonable given the sequential nature of the process.
 
The obtained result is optimal, aligns with the lower bound established by \citet{tyurin2024tighttimecomplexitiesparallel}, and cannot be improved by any asynchronous parallel method.

\section{Conclusion and Future Work}
\label{sec:sonclusion}

In this work, we developed the first \algname{Asynchronous SGD} method, named \algname{Ringmaster ASGD}, that achieves optimal time complexity.
By employing a carefully designed algorithmic approach, which can be interpreted as \algname{Asynchronous SGD} with adaptive step sizes \eqref{eq:adapative_step_size}, we successfully reach this goal.
By selecting an appropriate delay threshold $R$ in \Cref{alg:ringmasternew}, the method attains the theoretical lower bounds established by \citet{tyurin2024optimal,tyurin2024tighttimecomplexitiesparallel}.

Future work can explore heterogeneous scenarios, where the data on each device comes from different distributions, which are particularly relevant for federated learning (FL) \citep{konevcny2016federated,mcmahan2016federated,kairouz2021advances}. 
In FL, communication costs also become crucial, making it worthwhile to consider similar extensions \citep{alistarh2017qsgd,tyurin2024shadowheart,tyurin2024optimalgraph}. It would be also interesting to design a fully asynchronous analog to the method from \citep{tyurin2024freya}. Additionally, as discussed by \citet{maranjyan2025mindflayer}, computation and communication times can be treated as random variables.
It would be valuable to investigate these cases further and derive closed-form expressions for various distributions.

\section*{Acknowledgments}
The research reported in this publication was supported by funding from King Abdullah University of Science and Technology (KAUST): i) KAUST Baseline Research Scheme, ii) Center of Excellence for Generative AI, under award number 5940, iii) SDAIA-KAUST Center of Excellence in Artificial Intelligence and Data Science. We would like to thank Adrien Fradin for valuable discussions and helpful suggestions that contributed to the improvement of this work.

\section*{Impact Statement}
This paper presents work whose goal is to advance the field of 
Machine Learning. There are many potential societal consequences 
of our work, none which we feel must be specifically highlighted here.

% In the unusual situation where you want a paper to appear in the
% references without citing it in the main text, use \nocite
% \nocite{langley00}

\bibliography{bib}

\begin{thebibliography}{33}
\providecommand{\natexlab}[1]{#1}
\providecommand{\url}[1]{\texttt{#1}}
\expandafter\ifx\csname urlstyle\endcsname\relax
  \providecommand{\doi}[1]{doi: #1}\else
  \providecommand{\doi}{doi: \begingroup \urlstyle{rm}\Url}\fi

\bibitem[Alistarh et~al.(2017)Alistarh, Grubic, Li, Tomioka, and
  Vojnovic]{alistarh2017qsgd}
Alistarh, D., Grubic, D., Li, J., Tomioka, R., and Vojnovic, M.
\newblock {QSGD}: {C}ommunication-efficient {SGD} via gradient quantization and
  encoding.
\newblock In \emph{Advances in Neural Information Processing Systems (NIPS)},
  pp.\  1709--1720, 2017.

\bibitem[Arjevani et~al.(2020)Arjevani, Shamir, and Srebro]{arjevani2020tight}
Arjevani, Y., Shamir, O., and Srebro, N.
\newblock A tight convergence analysis for stochastic gradient descent with
  delayed updates.
\newblock In \emph{Algorithmic Learning Theory}, pp.\  111--132. PMLR, 2020.

\bibitem[Arjevani et~al.(2022)Arjevani, Carmon, Duchi, Foster, Srebro, and
  Woodworth]{arjevani2022lower}
Arjevani, Y., Carmon, Y., Duchi, J.~C., Foster, D.~J., Srebro, N., and
  Woodworth, B.
\newblock Lower bounds for non-convex stochastic optimization.
\newblock \emph{Mathematical Programming}, pp.\  1--50, 2022.

\bibitem[Chen et~al.(2016)Chen, Pan, Monga, Bengio, and
  Jozefowicz]{chen2016revisiting}
Chen, J., Pan, X., Monga, R., Bengio, S., and Jozefowicz, R.
\newblock Revisiting distributed synchronous {SGD}.
\newblock \emph{arXiv preprint arXiv:1604.00981}, 2016.

\bibitem[Cohen et~al.(2021)Cohen, Daniely, Drori, Koren, and
  Schain]{cohen2021asynchronous}
Cohen, A., Daniely, A., Drori, Y., Koren, T., and Schain, M.
\newblock Asynchronous stochastic optimization robust to arbitrary delays.
\newblock \emph{Advances in Neural Information Processing Systems},
  34:\penalty0 9024--9035, 2021.

\bibitem[Dean \& Barroso(2013)Dean and Barroso]{dean2013tail}
Dean, J. and Barroso, L.~A.
\newblock The tail at scale.
\newblock \emph{Communications of the ACM}, 56\penalty0 (2):\penalty0 74--80,
  2013.

\bibitem[Dean et~al.(2012)Dean, Corrado, Monga, Chen, Devin, Mao, Ranzato,
  Senior, Tucker, Yang, et~al.]{dean2012large}
Dean, J., Corrado, G., Monga, R., Chen, K., Devin, M., Mao, M., Ranzato, M.,
  Senior, A., Tucker, P., Yang, K., et~al.
\newblock Large scale distributed deep networks.
\newblock \emph{Advances in Neural Information Processing Systems}, 25, 2012.

\bibitem[Dutta et~al.(2018)Dutta, Joshi, Ghosh, Dube, and
  Nagpurkar]{dutta2018slow}
Dutta, S., Joshi, G., Ghosh, S., Dube, P., and Nagpurkar, P.
\newblock Slow and stale gradients can win the race: Error-runtime trade-offs
  in distributed {SGD}.
\newblock In \emph{International Conference on Artificial Intelligence and
  Statistics}, pp.\  803--812. PMLR, 2018.

\bibitem[Feyzmahdavian \& Johansson(2023)Feyzmahdavian and
  Johansson]{feyzmahdavian2023asynchronous}
Feyzmahdavian, H.~R. and Johansson, M.
\newblock Asynchronous iterations in optimization: New sequence results and
  sharper algorithmic guarantees.
\newblock \emph{Journal of Machine Learning Research}, 24\penalty0
  (158):\penalty0 1--75, 2023.

\bibitem[Feyzmahdavian et~al.(2016)Feyzmahdavian, Aytekin, and
  Johansson]{feyzmahdavian2016asynchronous}
Feyzmahdavian, H.~R., Aytekin, A., and Johansson, M.
\newblock An asynchronous mini-batch algorithm for regularized stochastic
  optimization.
\newblock \emph{IEEE Transactions on Automatic Control}, 61\penalty0
  (12):\penalty0 3740--3754, 2016.

\bibitem[Ghadimi \& Lan(2013)Ghadimi and Lan]{ghadimi2013stochastic}
Ghadimi, S. and Lan, G.
\newblock Stochastic first-and zeroth-order methods for nonconvex stochastic
  programming.
\newblock \emph{SIAM Journal on Optimization}, 23\penalty0 (4):\penalty0
  2341--2368, 2013.

\bibitem[Islamov et~al.(2024)Islamov, Safaryan, and
  Alistarh]{islamov2023asgrad}
Islamov, R., Safaryan, M., and Alistarh, D.
\newblock {AsGrad}: A sharp unified analysis of asynchronous-{SGD} algorithms.
\newblock In \emph{International Conference on Artificial Intelligence and
  Statistics}, pp.\  649--657. PMLR, 2024.

\bibitem[Kairouz et~al.(2021)Kairouz, McMahan, Avent, Bellet, Bennis, Bhagoji,
  Bonawitz, Charles, Cormode, Cummings, et~al.]{kairouz2021advances}
Kairouz, P., McMahan, H.~B., Avent, B., Bellet, A., Bennis, M., Bhagoji, A.~N.,
  Bonawitz, K., Charles, Z., Cormode, G., Cummings, R., et~al.
\newblock Advances and open problems in federated learning.
\newblock \emph{Foundations and Trends{\textregistered} in Machine Learning},
  14\penalty0 (1--2):\penalty0 1--210, 2021.

\bibitem[Koloskova et~al.(2022)Koloskova, Stich, and
  Jaggi]{koloskova2022sharper}
Koloskova, A., Stich, S.~U., and Jaggi, M.
\newblock Sharper convergence guarantees for {A}synchronous {SGD} for
  distributed and federated learning.
\newblock \emph{Advances in Neural Information Processing Systems},
  35:\penalty0 17202--17215, 2022.

\bibitem[Kone{\v{c}}n{\'y} et~al.(2016)Kone{\v{c}}n{\'y}, McMahan, Yu,
  Richt{\'a}rik, Suresh, and Bacon]{konevcny2016federated}
Kone{\v{c}}n{\'y}, J., McMahan, H.~B., Yu, F.~X., Richt{\'a}rik, P., Suresh,
  A.~T., and Bacon, D.
\newblock Federated learning: Strategies for improving communication
  efficiency.
\newblock \emph{arXiv preprint arXiv:1610.05492}, 2016.

\bibitem[LeCun et~al.(1998)LeCun, Bottou, Bengio, and
  Haffner]{lecun1998gradient}
LeCun, Y., Bottou, L., Bengio, Y., and Haffner, P.
\newblock Gradient-based learning applied to document recognition.
\newblock \emph{Proceedings of the IEEE}, 86\penalty0 (11):\penalty0
  2278--2324, 1998.

\bibitem[Lian et~al.(2018)Lian, Zhang, Zhang, and Liu]{lian2018asynchronous}
Lian, X., Zhang, W., Zhang, C., and Liu, J.
\newblock Asynchronous decentralized parallel stochastic gradient descent.
\newblock In \emph{International Conference on Machine Learning}, pp.\
  3043--3052. PMLR, 2018.

\bibitem[Lu \& De~Sa(2021)Lu and De~Sa]{lu2021optimal}
Lu, Y. and De~Sa, C.
\newblock Optimal complexity in decentralized training.
\newblock In \emph{International Conference on Machine Learning}, pp.\
  7111--7123. PMLR, 2021.

\bibitem[Maranjyan et~al.(2025)Maranjyan, Omar, and
  Richt{\'a}rik]{maranjyan2025mindflayer}
Maranjyan, A., Omar, O.~S., and Richt{\'a}rik, P.
\newblock {MindFlayer SGD}: Efficient parallel {SGD} in the presence of
  heterogeneous and random worker compute times.
\newblock In \emph{Conference on Uncertainty in Artificial Intelligence}, 2025.

\bibitem[McMahan et~al.(2016)McMahan, Moore, Ramage, and
  y~Arcas]{mcmahan2016federated}
McMahan, H.~B., Moore, E., Ramage, D., and y~Arcas, B.~A.
\newblock Federated learning of deep networks using model averaging.
\newblock \emph{arXiv preprint arXiv:1602.05629}, 2:\penalty0 2, 2016.

\bibitem[Mishchenko et~al.(2022)Mishchenko, Bach, Even, and
  Woodworth]{mishchenko2022asynchronous}
Mishchenko, K., Bach, F., Even, M., and Woodworth, B.
\newblock Asynchronous {SGD} beats minibatch {SGD} under arbitrary delays.
\newblock \emph{arXiv preprint arXiv:2206.07638}, 2022.

\bibitem[Nesterov(2018)]{nesterov2018lectures}
Nesterov, Y.
\newblock \emph{Lectures on convex optimization}, volume 137.
\newblock Springer, 2018.

\bibitem[Nguyen et~al.(2018)Nguyen, Nguyen, Dijk, Richt{\'a}rik, Scheinberg,
  and Tak{\'a}c]{nguyen2018sgd}
Nguyen, L., Nguyen, P.~H., Dijk, M., Richt{\'a}rik, P., Scheinberg, K., and
  Tak{\'a}c, M.
\newblock {SGD} and {Hogwild}! convergence without the bounded gradients
  assumption.
\newblock In \emph{International Conference on Machine Learning}, pp.\
  3750--3758. PMLR, 2018.

\bibitem[Recht et~al.(2011)Recht, Re, Wright, and Niu]{recht2011hogwild}
Recht, B., Re, C., Wright, S., and Niu, F.
\newblock {Hogwild}!: A lock-free approach to parallelizing stochastic gradient
  descent.
\newblock \emph{Advances in Neural Information Processing Systems}, 24, 2011.

\bibitem[Scaman et~al.(2017)Scaman, Bach, Bubeck, Lee, and
  Massouli{\'e}]{scaman2017optimal}
Scaman, K., Bach, F., Bubeck, S., Lee, Y.~T., and Massouli{\'e}, L.
\newblock Optimal algorithms for smooth and strongly convex distributed
  optimization in networks.
\newblock In \emph{International Conference on Machine Learning}, pp.\
  3027--3036. PMLR, 2017.

\bibitem[Stich \& Karimireddy(2020)Stich and Karimireddy]{stich2020error}
Stich, S.~U. and Karimireddy, S.~P.
\newblock The error-feedback framework: {SGD} with delayed gradients.
\newblock \emph{Journal of Machine Learning Research}, 21\penalty0
  (237):\penalty0 1--36, 2020.

\bibitem[Tsitsiklis et~al.(1986)Tsitsiklis, Bertsekas, and
  Athans]{tsitsiklis1986distributed}
Tsitsiklis, J., Bertsekas, D., and Athans, M.
\newblock Distributed asynchronous deterministic and stochastic gradient
  optimization algorithms.
\newblock \emph{IEEE Transactions on Automatic Control}, 31\penalty0
  (9):\penalty0 803--812, 1986.

\bibitem[Tyurin(2024)]{tyurin2024tighttimecomplexitiesparallel}
Tyurin, A.
\newblock Tight time complexities in parallel stochastic optimization with
  arbitrary computation dynamics, 2024.
\newblock URL \url{https://arxiv.org/abs/2408.04929}.

\bibitem[Tyurin \& Richt\'{a}rik(2023)Tyurin and
  Richt\'{a}rik]{tyurin2024optimal}
Tyurin, A. and Richt\'{a}rik, P.
\newblock Optimal time complexities of parallel stochastic optimization methods
  under a fixed computation model.
\newblock \emph{Advances in Neural Information Processing Systems}, 36, 2023.

\bibitem[Tyurin \& Richt{\'a}rik(2024)Tyurin and
  Richt{\'a}rik]{tyurin2024optimalgraph}
Tyurin, A. and Richt{\'a}rik, P.
\newblock On the optimal time complexities in decentralized stochastic
  asynchronous optimization.
\newblock \emph{Advances in Neural Information Processing Systems}, 37, 2024.

\bibitem[Tyurin et~al.(2024{\natexlab{a}})Tyurin, Gruntkowska, and
  Richt{\'a}rik]{tyurin2024freya}
Tyurin, A., Gruntkowska, K., and Richt{\'a}rik, P.
\newblock Freya {PAGE}: First optimal time complexity for large-scale nonconvex
  finite-sum optimization with heterogeneous asynchronous computations.
\newblock \emph{Advances in Neural Information Processing Systems}, 37,
  2024{\natexlab{a}}.

\bibitem[Tyurin et~al.(2024{\natexlab{b}})Tyurin, Pozzi, Ilin, and
  Richt{\'a}rik]{tyurin2024shadowheart}
Tyurin, A., Pozzi, M., Ilin, I., and Richt{\'a}rik, P.
\newblock {Shadowheart {SGD}}: Distributed asynchronous {SGD} with optimal time
  complexity under arbitrary computation and communication heterogeneity.
\newblock \emph{Advances in Neural Information Processing Systems}, 37,
  2024{\natexlab{b}}.

\bibitem[Woodworth et~al.(2018)Woodworth, Wang, Smith, McMahan, and
  Srebro]{woodworth2018graph}
Woodworth, B.~E., Wang, J., Smith, A., McMahan, B., and Srebro, N.
\newblock Graph oracle models, lower bounds, and gaps for parallel stochastic
  optimization.
\newblock \emph{Advances in Neural Information Processing Systems}, 31, 2018.

\end{thebibliography}
\bibliographystyle{icml2025}

%%%%%%%%%%%%%%%%%%%%%%%%%%%%%%%%%%%%%%%%%%%%%%%%%%%%%%%%%%%%%%%%%%%%%%%%%%%%%%%
%%%%%%%%%%%%%%%%%%%%%%%%%%%%%%%%%%%%%%%%%%%%%%%%%%%%%%%%%%%%%%%%%%%%%%%%%%%%%%%
% APPENDIX
%%%%%%%%%%%%%%%%%%%%%%%%%%%%%%%%%%%%%%%%%%%%%%%%%%%%%%%%%%%%%%%%%%%%%%%%%%%%%%%
%%%%%%%%%%%%%%%%%%%%%%%%%%%%%%%%%%%%%%%%%%%%%%%%%%%%%%%%%%%%%%%%%%%%%%%%%%%%%%%
\newpage
\appendix
\onecolumn

\section{Proof of \Cref{lem:time_R}}
\label{proof:time_R}

\begin{restate-lemma}{\ref{lem:time_R}}
    Let workers' computation times satisfy the \emph{fixed computation model} (\eqref{eq:worker-time} and \eqref{eq:sorted_t_i}).
    Let $R$ be the delay threshold of \Cref{alg:ringmasternew} or \Cref{alg:ringmasternewstops}.
    The time required to complete any $R$ consecutive iterates updates of \Cref{alg:ringmasternew} or \Cref{alg:ringmasternewstops} is at most
    \begin{equation}
        \tag{\ref{eq:time_R}}
        t(R) \eqdef 2 \min\limits_{m \in [n]} \left[\left(\frac{1}{m} \sum\limits_{i=1}^m \frac{1}{\tau_i}\right)^{-1} \left( 1 + \frac{R}{m} \right)\right].
    \end{equation}
\end{restate-lemma}

\begin{proof}
    We now focus on \Cref{alg:ringmasternewstops}.
    Let us consider a simplified notation of $t(R)$:
    \begin{align*}
        t \eqdef t(R) = 2 \min_{m\in[n]} \left(\left(\sum_{i=1}^m \frac{1}{\tau_i}\right)^{-1} (R + m)\right) = 2 \min\limits_{m \in [n]} \left(\frac{1}{m} \sum\limits_{i=1}^m \frac{1}{\tau_i}\right)^{-1} \left( 1 + \frac{R}{m} \right).
    \end{align*}
    Let us fix any iteration $k$, and consider the consecutive iterations from $k$ to $k + R - 1$.
    By the design of \Cref{alg:ringmasternewstops}, any worker will be stopped at most one time, meaning that at most one stochastic gradient will be ignored from the worker.
    After that, the delays of the next stochastic gradients will not exceed $R - 1$ during these iterations.
    As soon as some worker finishes calculating a stochastic gradient, it immediately starts computing a new stochastic gradient.
    
    Using a proof by contradiction, assume that \Cref{alg:ringmasternewstops} will not be able to finish iteration $k + R - 1$ by the time $t$.
    At the same time, by the time $t$, all workers will calculate at least
    \begin{align}
        \label{eq:RQKxmnzvrIUDaroZE}
        \sum_{i=1}^{n} \max\left\{\flr{\frac{t}{\tau_i}} - 1, 0\right\}
    \end{align}
    stochastic gradients with delays $< R$ because $\flr{\frac{t}{\tau_i}}$ is the number of stochastic gradients that worker $i$ can calculate after $t$ seconds.
    We subtract $1$ to account for the fact that one stochastic gradient with a large delay can be ignored.
    Let us take an index
    \begin{align*}
        j^* = \arg\min_{m\in[n]} \left(\left(\sum_{i=1}^m \frac{1}{\tau_i}\right)^{-1} (R + m)\right).
    \end{align*}
    Since $\flr{x} \geq x - 1$ for all $x \geq 0$, we get
    \begin{align*}
        \sum_{i=1}^{n} \max\left\{\flr{\frac{t}{\tau_i}} - 1, 0\right\} 
        &\geq \sum_{i=1}^{j^*} \max\left\{\flr{\frac{t}{\tau_i}} - 1, 0\right\} \geq \sum_{i=1}^{j^*} \left(\flr{\frac{t}{\tau_i}} - 1\right) \geq \sum_{i=1}^{j^*} \frac{t}{\tau_i} - 2 j^* \\
        &= 2 \left(\sum_{i=1}^{j^*} \frac{1}{\tau_i}\right) \left(\left(\sum_{i=1}^{j^*} \frac{1}{\tau_i}\right)^{-1} (R + j^*)\right) - 2 j^* = 2 R + 2 j^* - 2 j^* \geq R.
    \end{align*}
    % where the last inequality due to the inequality $\flr{x} \geq \frac{x}{2}$ for all $x \geq 1.$
    We can conclude that by the time \eqref{eq:time_R}, the algorithm will calculate $R$ stochastic gradients with delays $< R$ and finish iteration $k + R - 1$, which contradicts the assumption. 
    
    The proof for \Cref{alg:ringmasternew} is essentially the same.
    In \Cref{alg:ringmasternew}, one stochastic gradient with a large delay for each worker will be ignored, and all other stochastic will be used in the optimization process.
\end{proof}

\section{Proof of Lemma~\ref{lem:time_R_dyn}}
\label{sec:time_R_dyn}

\begin{restate-lemma}{\ref{lem:time_R_dyn}}
    Let the workers' computation times satisfy the \emph{universal computation model}.
    Let $R$ be the delay threshold of \Cref{alg:ringmasternew} or \Cref{alg:ringmasternewstops}.
    Assume that some iteration starts at time $T_0$.
    Starting from this iteration, the $R$ consecutive iterative updates of \Cref{alg:ringmasternew} or \Cref{alg:ringmasternewstops} will be performed before the time
    \begin{equation*} 
        T(R,T_0) \eqdef \min \left\{T \geq 0 : \sum_{i=1}^{n} \flr{\frac{1}{4} \int_{T_0}^{T} v_i(\tau) d \tau} \geq R\right\}.
    \end{equation*}
\end{restate-lemma}

\begin{proof}
    Let 
    \begin{align*}
        T \eqdef T(R,T_0) = \min \left\{T \geq 0 : \sum_{i=1}^{n} \flr{\frac{1}{4} \int_{T_0}^{T} v_i(\tau) d \tau} \geq R\right\}.
    \end{align*}
    At the beginning, this proof follows the proof of \Cref{lem:time_R} up to \eqref{eq:RQKxmnzvrIUDaroZE}.
    Here we also use a proof by contradiction and assume that \Cref{alg:ringmasternewstops} will not be able to do $R$ consecutive iterative updates by the time $T$.

    Instead of \eqref{eq:RQKxmnzvrIUDaroZE}, all workers will calculate at least
    \begin{align*}
        N \eqdef \sum_{i=1}^{n} \max\left\{\flr{\int_{T_0}^{T} v_i(\tau) d \tau} - 1, 0\right\} 
    \end{align*}
    stochastic gradients by time $T$ because, due to \eqref{eq:rSIiSfVcmivSKsfzoSA}, $\flr{\int_{T_0}^{T} v_i(\tau) d \tau}$ is the number of stochastic gradients that worker $i$ will calculate in the interval $[T_0, T]$.
    We define $V_i(T) \eqdef \int_{T_0}^{T} v_i(\tau) d \tau$.
    Thus,
    \begin{align*}
        N = \sum_{i=1}^{n} \max\left\{\flr{V_i(T)} - 1, 0\right\} \textnormal{ and } T = \min \left\{T \geq 0 : \sum_{i=1}^{n} \flr{\frac{V_i(T)}{4}} \geq R\right\}.
    \end{align*}
    Let us additionally define
    \begin{align}
        S \eqdef \left\{i \in [n] \,:\, \frac{V_i(T)}{4} \geq 1\right\}.
        \label{eq:peCcI}
    \end{align} 
    Note that
    \begin{align}
        \label{eq:XwmTgavNaBJOKaRgFMT}
        \sum_{i=1}^{n} \flr{\frac{V_i(T)}{4}} = \sum_{i \in S} \flr{\frac{V_i(T)}{4}},
    \end{align}
    since $\flr{\frac{V_i(T)}{4}} = 0$ for all $i \not\in S$.
    Using simple algebra, we get
    \begin{align*}
        N = \sum_{i=1}^{n} \max\{\flr{V_i(T)} - 1, 0\} &\geq \sum_{i \in S} \max\{\flr{V_i(T)} - 1, 0\} 
        \geq \sum_{i \in S} \flr{V_i(T)} - |S| \geq \sum_{i \in S} V_i(T) - 2 |S|,
    \end{align*}
    where the last inequality due to $\flr{x} \geq x - 1$ for all $x \in \R$.
    Next
    \begin{align*}
        N \overset{\eqref{eq:peCcI}}{\geq} \frac{1}{2} \sum_{i \in S} V_i(T) + \frac{1}{2} \sum_{i \in S} 4 - 2 |S| = \sum_{i \in S} \frac{V_i(T)}{2} \geq \sum_{i \in S} \flr{\frac{V_i(T)}{4}} \overset{\eqref{eq:XwmTgavNaBJOKaRgFMT}}{=} \sum_{i=1}^{n} \flr{\frac{V_i(T)}{4}} \geq R,
    \end{align*}
    where the last inequality due to the definition of $T$.
    As in the proof of \Cref{lem:time_R}, we can conclude that by the time $T = T(R,T_0)$, the algorithm will calculate $R$ stochastic gradients with delays $< R$ and finish iteration $k + R - 1,$ which contradicts the assumption.
\end{proof}

\section{Proof of \Cref{thm:ringmaster_iteration}}
\label{proof:ringmaster_iteration}

\begin{restate-theorem}{\ref{thm:ringmaster_iteration}}
    Under Assumptions \ref{ass:lipschitz_constant}, \ref{ass:lower_bound}, and \ref{ass:stochastic_variance_bounded}, let the stepsize in \algname{Ringmaster ASGD} (\Cref{alg:ringmasternew} or \Cref{alg:ringmasternewstops}) be
    $$
        \gamma = \min \left\{ \frac{1}{2RL}, \frac{\varepsilon}{4L\sigma^2} \right\}.
    $$
    Then, the following holds
    $$
    \frac{1}{K+1}\sum_{k=0}^{K} \Exp{\sqnorm{\nabla f \(x^k\)}} \le \varepsilon,
    $$
    for 
    \begin{align}
        \tag{\ref{eq:HPMNVlgxoiRgUbRtj}}
    K \geq \frac{8 R L \Delta}{\epsilon} + \frac{16 \sigma^2 L \Delta}{\epsilon^2},
    \end{align}
    where $R \in \{1, 2, \dots, \}$ is an arbitrary delay threshold.
\end{restate-theorem}

% \ringmasteriteration*

% \begin{restate-theorem}{\ref{thm:ringmaster_iteration}}
%     Under Assumptions \ref{ass:lipschitz_constant}, \ref{ass:lower_bound}, and \ref{ass:stochastic_variance_bounded}, let the stepsize in \algname{Ringmaster ASGD} (\Cref{alg:ringmasternew} or \Cref{alg:ringmasternewstops}) be
%     $$
%     \gamma = \min \left\{ \frac{1}{2RL}, \frac{\varepsilon}{4L\sigma^2} \right\},
%     $$
%     where $\Delta = f\(x^0\) - f^{\inf}$. 
%     Then, the following holds
%     $$
%     \frac{1}{K+1}\sum_{k=0}^{K} \Exp{\sqnorm{\nabla f \(x^k\)}} \le \varepsilon.
%     $$
%     for 
%     $$
%     K \geq \frac{8 \Delta R L}{\epsilon} + \frac{16 \Delta L \sigma^2}{\epsilon^2}.
%     $$
% \end{restate-theorem}

We adopt the proof strategy from \citet{koloskova2022sharper} and rely on the following two lemmas.

\begin{lemma}[Descent Lemma; Proof in Appendix~\ref{proof:descent}]
    \label{lemma:descent}
    Under Assumptions \ref{ass:lipschitz_constant} and \ref{ass:stochastic_variance_bounded}, if the stepsize in \algname{Ringmaster ASGD} (\Cref{alg:ringmasternew} or \Cref{alg:ringmasternewstops}) satisfies $\gamma \le \frac{1}{2L}$, the following inequality holds:
    $$
    \ExpSub{k+1}{f\(x^{k+1}\)} 
    \leq f\(x^{k}\) - \frac{\gamma}{2} \sqnorm{\nabla f\(x^{k}\)} - \frac{\gamma}{4}\sqnorm{\nabla f\(x^{k-\delta^k}\)} + \frac{\gamma L^2}{2}\sqnorm{x^k - x^{k-\delta^k}} + \frac{\gamma^2 L}{2} \sigma^2,
    $$
    where $\ExpSub{k+1}{\cdot}$ represents the expectation conditioned on all randomness up to iteration $k$.
\end{lemma}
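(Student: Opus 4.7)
The plan is to derive this descent inequality by combining $L$-smoothness of $f$ with the Ringmaster update rule and the bounded-variance assumption, in the classical style of delayed SGD analyses. First I would apply Assumption~\ref{ass:lipschitz_constant} in its standard descent form,
$$
f(x^{k+1}) \leq f(x^k) + \inp{\nabla f(x^k)}{x^{k+1} - x^k} + \tfrac{L}{2}\sqnorm{x^{k+1} - x^k},
$$
substitute the Ringmaster update $x^{k+1} - x^k = -\gamma\, \nabla f(x^{k-\delta^k};\xi^{k-\delta^k}_i)$, and take the conditional expectation $\ExpSub{k+1}{\cdot}$. Because $x^k$ and $x^{k-\delta^k}$ are measurable with respect to the conditioning, and the sample $\xi^{k-\delta^k}_i$ used by the worker that reports at iteration $k$ is drawn independently of the past, Assumption~\ref{ass:stochastic_variance_bounded} converts the linear term into $-\gamma \inp{\nabla f(x^k)}{\nabla f(x^{k-\delta^k})}$ and bounds the quadratic term via the bias/variance decomposition by $\tfrac{\gamma^2 L}{2}\bigl(\sqnorm{\nabla f(x^{k-\delta^k})} + \sigma^2\bigr)$.

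The second step is to handle the cross term with the polarization identity $-\inp{a}{b} = \tfrac{1}{2}\sqnorm{a-b} - \tfrac{1}{2}\sqnorm{a} - \tfrac{1}{2}\sqnorm{b}$ applied to $a = \nabla f(x^k)$ and $b = \nabla f(x^{k-\delta^k})$. This produces the desired $-\tfrac{\gamma}{2}\sqnorm{\nabla f(x^k)}$ and $-\tfrac{\gamma}{2}\sqnorm{\nabla f(x^{k-\delta^k})}$ contributions, together with a ``delay discrepancy'' term $\tfrac{\gamma}{2}\sqnorm{\nabla f(x^k) - \nabla f(x^{k-\delta^k})}$. I would then bound this last term by $\tfrac{\gamma L^2}{2}\sqnorm{x^k - x^{k-\delta^k}}$ via Lipschitzness of $\nabla f$, which exactly matches the third term on the right-hand side of the target inequality, while the first, second, and last terms are already in place.

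Finally I would collect the two contributions to the coefficient of $\sqnorm{\nabla f(x^{k-\delta^k})}$: polarization contributes $-\tfrac{\gamma}{2}$ and the variance bound contributes $+\tfrac{\gamma^2 L}{2}$. The stepsize restriction $\gamma \leq \tfrac{1}{2L}$ yields $\tfrac{\gamma^2 L}{2} \leq \tfrac{\gamma}{4}$, so the net coefficient is at most $-\tfrac{\gamma}{4}$, precisely as claimed. I do not expect a genuine obstacle here: the argument is a single pass of smoothness plus polarization, is independent of the delay threshold $R$, and never uses the specific structure of Algorithms~\ref{alg:ringmasternew}/\ref{alg:ringmasternewstops} beyond the update form. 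The only subtle point worth stating carefully is the measurability setup: the conditioning in $\ExpSub{k+1}{\cdot}$ must render $x^k$ and $x^{k-\delta^k}$ (hence also $\nabla f(x^k)$ and $\nabla f(x^{k-\delta^k})$) deterministic so that the fresh sample $\xi^{k-\delta^k}_i$ is conditionally independent of the history, which is the standard filtration assumption in the asynchronous-SGD literature.
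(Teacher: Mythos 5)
Your proposal is correct and follows essentially the same route as the paper's proof: $L$-smoothness, unbiasedness to reduce the cross term to $\inp{\nabla f(x^k)}{\nabla f(x^{k-\delta^k})}$, the polarization identity, the variance decomposition for the squared stochastic gradient, Lipschitzness of $\nabla f$ to convert the gradient discrepancy into $\sqnorm{x^k - x^{k-\delta^k}}$, and finally $\gamma \le \frac{1}{2L}$ to absorb the $\frac{\gamma^2 L}{2}$ contribution and leave a net coefficient of $-\frac{\gamma}{4}$ on $\sqnorm{\nabla f(x^{k-\delta^k})}$. Your remark on the measurability of $x^k$ and $x^{k-\delta^k}$ under the conditioning is the same implicit filtration setup the paper uses.
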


\begin{lemma}[Residual Estimation; Proof in Appendix~\ref{proof:residual}]
    \label{lemma:residual}
    Under Assumptions \ref{ass:lipschitz_constant} and \ref{ass:stochastic_variance_bounded}, the iterates of \algname{Ringmaster ASGD} (\Cref{alg:ringmasternew} or \Cref{alg:ringmasternewstops}) with stepsize $\gamma \leq \frac{1}{2RL}$ satisfy the following bound:
    $$
    \frac{1}{K+1} \sum_{k=0}^K \E{\sqnorm{x^k - x^{k-\delta^k}}} \leq \frac{1}{2 L^2(K+1)} \sum_{k=0}^K \E{\sqnorm{\nabla f\(x^{k-\delta^k}\)}} + \frac{\gamma}{L} \sigma^2.
    $$
\end{lemma}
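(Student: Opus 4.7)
The plan is to expand the residual via the update rule. Writing $g^j \eqdef \nabla f(x^{j-\delta^j}; \xi^{j-\delta^j}_{i_j})$ for the stochastic gradient applied at iteration $j$, telescoping $x^{j+1} = x^j - \gamma g^j$ gives
$$
x^k - x^{k-\delta^k} = -\gamma \sum_{j=k-\delta^k}^{k-1} g^j.
$$
The structural fact underlying the whole argument is that \Cref{alg:ringmasternew} and \Cref{alg:ringmasternewstops} commit only gradients whose delay is strictly less than $R$, so $\delta^k \le R-1$ and the block above has at most $R$ summands.

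Next I would decompose $g^j = \nabla f(x^{j-\delta^j}) + \eta^j$, where $\eta^j$ denotes the stochastic gradient noise, and apply $\|a+b\|^2 \le 2\|a\|^2 + 2\|b\|^2$ to obtain
$$
\Exp{\sqnorm{x^k - x^{k-\delta^k}}} \le 2\gamma^2 \Exp{\sqnorm{\sum_{j=k-\delta^k}^{k-1} \nabla f(x^{j-\delta^j})}} + 2\gamma^2 \Exp{\sqnorm{\sum_{j=k-\delta^k}^{k-1} \eta^j}}.
$$
For the signal piece, Cauchy--Schwarz on a sum of at most $\delta^k \le R$ terms yields $R \sum_{j=k-\delta^k}^{k-1} \Exp{\sqnorm{\nabla f(x^{j-\delta^j})}}$. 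For the noise piece, $\{\eta^j\}$ forms a martingale-difference sequence because, by Assumption~\ref{ass:stochastic_variance_bounded}, the sample used by worker $i_j$ is drawn fresh and independently of everything previously revealed when the worker begins its computation; so $\Exp{\eta^{j'} \mid \cF^{j'-1}} = 0$ for a suitable filtration $\cF$, and all cross terms $\Exp{\inp{\eta^j}{\eta^{j'}}}$ (restricted to the random range) vanish by the tower property. This produces $\Exp{\sqnorm{\sum_{j=k-\delta^k}^{k-1} \eta^j}} \le \Exp{\delta^k} \sigma^2 \le R \sigma^2$, which is \emph{linear} in $R$ rather than the quadratic estimate a bare Cauchy--Schwarz would deliver.

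Next I would sum over $k = 0, \dots, K$ and swap the order of summation on the signal side: for each fixed $j$, the set $\{k : k - \delta^k \le j \le k-1\}$ has at most $R$ elements, since $k \le j + \delta^k \le j + R - 1$. This contributes another factor $R$ and bounds the double sum by $R \sum_{j=0}^K \Exp{\sqnorm{\nabla f(x^{j-\delta^j})}}$; meanwhile, summing $R\sigma^2$ over the $K+1$ outer indices gives $(K+1) R \sigma^2$ on the noise side. Dividing by $K+1$ produces
$$
\frac{1}{K+1} \sum_{k=0}^K \Exp{\sqnorm{x^k - x^{k-\delta^k}}} \le \frac{2\gamma^2 R^2}{K+1} \sum_{k=0}^K \Exp{\sqnorm{\nabla f(x^{k-\delta^k})}} + 2\gamma^2 R \sigma^2,
$$
and the stepsize hypothesis $\gamma \le \frac{1}{2RL}$---equivalently $\gamma R \le \frac{1}{2L}$---converts the constants $2\gamma^2 R^2$ and $2\gamma^2 R$ into $\frac{1}{2L^2}$ and $\frac{\gamma}{L}$, respectively, matching the claim exactly.

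The principal obstacle is the martingale orthogonality step in the presence of a random summation range: although $\eta^j$ is revealed at iteration $j$, the indicator $\mathbb{1}[j \in [k-\delta^k, k-1]]$ involves the future quantity $\delta^k$. I would handle this by augmenting the filtration with the full scheduling information---the sequence of worker-completion times, which is independent of the sample randomness---so that $\delta^k$ and all such indicators become measurable from the start. Then, for any cross-term pair $j < j'$, conditioning on $\cF^{j'-1}$ freezes $\eta^j$ and the indicators while leaving $\eta^{j'}$ with conditional mean zero, and the tower property kills the cross term. The signal-side reorganisation of the double sum, while combinatorially routine, is equally essential: it replaces the per-iteration delay $\delta^k$ with the single uniform factor $R$ that the method can meaningfully control via the choice of $R$ in \Cref{thm:optimal_ringmaster}.
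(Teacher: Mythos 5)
Your proposal is correct and follows essentially the same route as the paper's proof: telescoping the update, splitting signal and noise via Young's inequality, killing the noise cross-terms by a tower-property/martingale argument, bounding the signal block by Cauchy--Schwarz with $\delta^k \le R-1$, and then swapping the order of summation so each gradient norm is counted at most $R$ times before invoking $\gamma \le \frac{1}{2RL}$. The only difference is that you make explicit the measurability issue arising from the random summation range, which the paper handles implicitly under its appeal to the tower property.
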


With these results, we are now ready to prove \Cref{thm:ringmaster_iteration}.

\begin{proof}[Proof of \Cref{thm:ringmaster_iteration}]
We begin by averaging over $K+1$ iterations and dividing by $\gamma$ in the inequality from \Cref{lemma:descent}:
\begin{align*}
\frac{1}{K+1} \sum_{k=0}^K \( \frac{1}{2} \E{ \sqnorm{\nabla f\(x^k\)}} + \frac{1}{4} \E{ \sqnorm{\nabla f(x^{k-\delta^k})}} \) 
& \leq \frac{\Delta}{\gamma(K+1)} + \frac{\gamma L}{2} \sigma^2 \\ 
&\quad + \frac{1}{K+1} \frac{L^2}{2} \sum_{k=0}^K \E{ \sqnorm{x^k - x^{k-\delta^k}}}.
\end{align*}
Next, applying \Cref{lemma:residual} to the last term, we have:
\begin{align*}
    \frac{1}{K+1} \sum_{k=0}^K \( \frac{1}{2} \E{ \sqnorm{\nabla f\(x^k\)}} + \frac{1}{4} \E{ \sqnorm{\nabla f(x^{k-\delta^k})}} \) 
    & \leq \frac{\Delta}{\gamma(K+1)} + \frac{\gamma L}{2} \sigma^2 \\ 
    &\quad + \frac{1}{4(K+1)} \sum_{k=0}^K \E{\sqnorm{\nabla f\(x^{k-\delta^k}\)}} + \frac{\gamma L}{2}\sigma^2.
\end{align*}
Simplifying further, we obtain:
$$
\frac{1}{K+1} \sum_{k=0}^K \E{ \sqnorm{\nabla f\(x^k\)} } \leq \frac{2\Delta}{\gamma(K+1)} + 2\gamma L \sigma^2.
$$

Now, we choose the stepsize $\gamma$ as:
$$
\gamma = \min \left\{ \frac{1}{2RL}, \frac{\varepsilon}{4L\sigma^2} \right\} \leq \frac{1}{2 R L}.
$$
With this choice of $\gamma$, it remains to choose
$$
K \geq \frac{8 \Delta R L}{\epsilon} + \frac{16 \Delta L \sigma^2}{\epsilon^2}
$$ 
to ensure
$$
\frac{1}{K+1} \sum_{k=0}^{K} \E{\sqnorm{ \nabla f\(x^k\) }} \leq \epsilon.
$$
This completes the proof.

\end{proof}

\subsection{Proof of \Cref{lemma:descent}}
\label{proof:descent}

\begin{restate-lemma}{\ref{lemma:descent}}[Descent Lemma]
    Under Assumptions \ref{ass:lipschitz_constant} and \ref{ass:stochastic_variance_bounded}, if the stepsize in \algname{Ringmaster ASGD} (\Cref{alg:ringmasternew} or \Cref{alg:ringmasternewstops}) satisfies $\gamma \le \frac{1}{2L}$, the following inequality holds:
    $$
    \ExpSub{k+1}{f\(x^{k+1}\)} 
    \leq f\(x^{k}\) - \frac{\gamma}{2} \sqnorm{\nabla f\(x^{k}\)} - \frac{\gamma}{4}\sqnorm{\nabla f\(x^{k-\delta^k}\)} + \frac{\gamma L^2}{2}\sqnorm{x^k - x^{k-\delta^k}} + \frac{\gamma^2 L}{2} \sigma^2,
    $$
    where $\ExpSub{k+1}{\cdot}$ represents the expectation conditioned on all randomness up to iteration $k$.
\end{restate-lemma}

% \begin{restate-lemma}{\ref{lemma:descent}}[Descent Lemma]
%     Under Assumptions \ref{ass:lipschitz_constant} and \ref{ass:stochastic_variance_bounded}, if the stepsize in \algname{Ringmaster ASGD} (\Cref{alg:ringmasternew} or \Cref{alg:ringmasternewstops}) satisfies $\gamma \le \frac{3}{4L}$, the following inequality holds:
%     $$
%     \ExpSub{k+1}{f\(x^{k+1}\)} 
%     \leq f\(x^{k}\) - \frac{\gamma}{2} \sqnorm{\nabla f\(x^{k}\)} - \frac{\gamma}{8}\sqnorm{\nabla f\(x^{k-\delta^k}\)} + \frac{\gamma L^2}{2}\sqnorm{x^k - x^{k-\delta^k}} + \frac{\gamma^2 L}{2} \sigma^2,
%     $$
%     where $\ExpSub{k+1}{\cdot}$ represents the expectation conditioned on all randomness up to iteration $k$.
% \end{restate-lemma}

\begin{proof}
Assume that we get a stochastic gradient from the worker with index $i_k$ when calculating $x^{k+1}$.
Since the function $f$ is $L$-smooth (Assumption~\ref{ass:lipschitz_constant}), we have \citep{nesterov2018lectures}:
\begin{eqnarray*}
    \ExpSub{k+1}{f\(x^{k+1}\)} 
    &\le& f\(x^{k}\) 
        - \gamma \underbrace{\ExpSub{k+1}{\< \nabla f\(x^{k}\), \nabla f\(x^{k-\delta^k}; \xi^{k-\delta^k}_{i_k}\) \>}}_{=: t_1} 
        \; + \; \frac{L}{2} \gamma^2 \underbrace{\ExpSub{k+1}{\sqnorm{\nabla f\(x^{k-\delta^k}; \xi^{k-\delta^k}_{i_k}\)}}}_{=: t_2}.
\end{eqnarray*}
Using Assumption~\ref{ass:stochastic_variance_bounded}, we estimate the second term as
$$
t_1 
= \< \nabla f\(x^{k}\), \nabla f\(x^{k-\delta^k}\) \> 
= \frac{1}{2} \[\sqnorm{\nabla f\(x^{k}\)} 
    + \sqnorm{\nabla f\(x^{k-\delta^k}\)} 
    - \sqnorm{\nabla f\(x^{k}\) - \nabla f\(x^{k-\delta^k}\)} \].
$$
% Next, for $t_2$, we add and subtract $\nabla f\(x^{k-\delta^k}\)$, then use \Cref{ass:stochastic_variance_bounded} to bound the result:
Using the variance decomposition equality and Assumption~\ref{ass:stochastic_variance_bounded}, we get
\begin{eqnarray*}
t_2 
&=& \ExpSub{k+1}{\sqnorm{\nabla f\(x^{k-\delta^k}; \xi^{k-\delta^k}_{i_k}\) - \nabla f\(x^{k-\delta^k}\)}} 
    + \sqnorm{\nabla f\(x^{k-\delta^k}\)} \\
&\le& \sigma^2 
    + \sqnorm{\nabla f\(x^{k-\delta^k}\)} .
\end{eqnarray*}
Combining the results for $t_1$ and $t_2$, and using $L$-smoothness to bound $\sqnorm{\nabla f\(x^{k}\) - \nabla f\(x^{k-\delta^k}\)}$, we get:
$$
\ExpSub{k+1}{f\(x^{k+1}\)} 
\leq f\(x^{k}\) - \frac{\gamma}{2} \sqnorm{\nabla f\(x^{k}\)} - \frac{\gamma}{2} (1 - \gamma L) \sqnorm{\nabla f\(x^{k-\delta^k}\)} + \frac{\gamma L^2}{2}\sqnorm{x^k - x^{k-\delta^k}} + \frac{\gamma^2 L}{2} \sigma^2.
$$
Finally, applying the condition $\gamma \le \frac{1}{2 L}$ completes the proof.
\end{proof}

\subsection{Proof of \Cref{lemma:residual}}
\label{proof:residual}

\begin{restate-lemma}{\ref{lemma:residual}}[Residual Estimation]
    Under Assumptions \ref{ass:lipschitz_constant} and \ref{ass:stochastic_variance_bounded}, the iterates of \algname{Ringmaster ASGD} (\Cref{alg:ringmasternew} or \Cref{alg:ringmasternewstops}) with stepsize $\gamma \leq \frac{1}{2RL}$ satisfy the following bound:
    $$
    \frac{1}{K+1} \sum_{k=0}^K \E{\sqnorm{x^k - x^{k-\delta^k}}} \leq \frac{1}{2 L^2(K+1)} \sum_{k=0}^K \E{\sqnorm{\nabla f\(x^{k-\delta^k}\)}} + \frac{\gamma}{L} \sigma^2.
    $$
\end{restate-lemma}

% \begin{restate-lemma}{\ref{lemma:residual}}[Residual Estimation]
% Under Assumptions \ref{ass:lipschitz_constant} and \ref{ass:stochastic_variance_bounded}, the iterates of \algname{Ringmaster ASGD} (\Cref{alg:ringmasternew} or \Cref{alg:ringmasternewstops}) with stepsize $\gamma \leq \frac{1}{2RL}$ satisfy the following bound:
% $$
% \frac{1}{K+1} \sum_{k=0}^K \E{\sqnorm{x^k - x^{k-\delta^k}}} \leq \frac{1}{4L^2(K+1)} \sum_{k=0}^K \E{\sqnorm{\nabla f\(x^{k-\delta^k}\)}} + \frac{\gamma}{2L} \sigma^2.
% $$
% \end{restate-lemma}

\begin{proof}
Assume that we get a stochastic gradient from the worker with index $i_k$ when calculating $x^{k+1}$.
We begin by expanding the difference and applying the tower property, Assumption~\ref{ass:stochastic_variance_bounded}, Young's inequality, and Jensen's inequality:
\begin{align*}
\E{ \sqnorm{x^k - x^{k-\delta^k}} } 
& = \E{ \sqnorm{ \sum_{j=k-\delta^k}^{k-1} \gamma \nabla f\(x^{j-\delta_j}; \xi^{j-\delta_j}_{i_j}\) } } \\
& \leq 2 \E{ \sqnorm{ \sum_{j=k-\delta^k}^{k-1} \gamma \nabla f\(x^{j-\delta_j}\) } } + 2 \E{ \sqnorm{ \sum_{j=k-\delta^k}^{k-1} \gamma \left(\nabla f\(x^{j-\delta_j}; \xi^{j-\delta_j}_{i_j}\) - \nabla f\(x^{j-\delta_j}\)\right) } }\\
& \leq 2 \E{ \sqnorm{ \gamma \sum_{j=k-\delta^k}^{k-1}  \nabla f\(x^{j-\delta_j}\) } } + 2 \delta^k \gamma^2 \sigma^2\\
& \leq 2 \delta^k \gamma^2 \sum_{j=k-\delta^k}^{k-1} \E{ \sqnorm{\nabla f\(x^{j-\delta_j}\)}} + 2 \delta^k \gamma^2 \sigma^2.
\end{align*}

Using that $\gamma \leq \frac{1}{2RL}$ and $\delta^k \leq R - 1$ (by the design), we obtain:
\begin{align*}
\E{ \sqnorm{x^k - x^{k-\delta^k}}} 
& \leq \frac{1}{2 L^2 R} \sum_{j=k-\delta^k}^{k-1} \E{ \sqnorm{\nabla f\(x^{j-\delta_j}\)}} + \frac{\gamma}{L} \sigma^2.
\end{align*}

Next, summing over all iterations $k = 0, \dots, K$, we get:
\begin{align*}
\sum_{k=0}^K \E{ \sqnorm{x^k - x^{k-\delta^k}}} 
& \leq \frac{1}{2 L^2 R} \sum_{k=0}^K \sum_{j=k-\delta^k}^{k-1} \E{ \sqnorm{\nabla f\(x^{j-\delta_j}\)} } + \(K+1\) \frac{\gamma}{L} \sigma^2 .
\end{align*}
Observe that each squared norm $\sqnorm{\nabla f\(x^{j-\delta_j}\)}$ in the right-hand sums appears at most $R$ times due to the algorithms' design.
Specifically, $\delta^k \leq R - 1$ for all $k \geq 0$, ensuring no more than $R$ squared norms appear in the sums.
Therefore:
\begin{align*}
\sum_{k=0}^K \E{ \sqnorm{x^k - x^{k-\delta^k}} } & \leq \frac{1}{2 L^2} \sum_{k=0}^K \E{ \sqnorm{\nabla f\(x^{k-\delta^k}\)} } + \(K+1\) \frac{\gamma}{L} \sigma^2.
\end{align*}
Finally, dividing the inequality by $K+1$ completes the proof.
\end{proof}

\section{Proof of Theorem~\ref{thm:optimal_ringmaster_dynamic}}
\label{sec:proof_dyn}

\begin{restate-theorem}{\ref{thm:optimal_ringmaster_dynamic}}
    Let Assumptions \ref{ass:lipschitz_constant}, \ref{ass:lower_bound}, and \ref{ass:stochastic_variance_bounded} hold.
    Let the stepsize in \algname{Ringmaster ASGD} (\Cref{alg:ringmasternew} or \Cref{alg:ringmasternewstops}) be
    $\gamma = \min \left\{ \frac{1}{2RL}, \frac{\varepsilon}{4L\sigma^2} \right\}$, and delay threshold $R = \max\left\{1,\left\lceil \frac{\sigma^2}{\varepsilon} \right\rceil\right\}$.
    Then, under the \emph{universal computation model}, \algname{Ringmaster ASGD} finds an $\varepsilon$--stationary point after at most $T_{\bar{K}}$ seconds, where $\bar{K} \eqdef \left\lceil\frac{48 L \Delta}{\epsilon}\right\rceil$ and $T_{\bar{K}}$ is the $\bar{K}$\textsuperscript{th} element of the following recursively defined sequence:
    \begin{align*}
        T_{K} \eqdef \min \left\{T \geq 0 : \sum_{i=1}^{n} \flr{\frac{1}{4} \int_{T_{K - 1}}^{T} v_i(\tau) d \tau} \geq R\right\}
    \end{align*}
    for all $K \geq 1$ and $T_{0} = 0$.
\end{restate-theorem}

\begin{proof}
    From \Cref{thm:ringmaster_iteration}, the iteration complexity of \algname{Ringmaster ASGD} is
    \begin{align}
    K = \left\lceil\frac{8 R L \Delta}{\epsilon} + \frac{16 \sigma^2 L \Delta}{\epsilon^2}\right\rceil.
    % \leq \left\lceil\frac{32 \sigma^2 L \Delta}{\epsilon^2}\right\rceil,
    \end{align}
    Without loss of generality, we assume that
    \footnote{Otherwise, using $L$--smoothness, $\norm{\nabla f(x^0)}^2 \leq 2 L \Delta \leq \varepsilon,$ and the initial point is an $\varepsilon$--stationary point}
    $L \Delta > \varepsilon / 2$.
    Thus, $\left\lceil\frac{8 R L \Delta}{\epsilon} + \frac{16 \sigma^2 L \Delta}{\epsilon^2}\right\rceil \leq \frac{16 R L \Delta}{\epsilon} + \frac{32 \sigma^2 L \Delta}{\epsilon^2}$ and
    \begin{align*}
        K \leq R \times \left\lceil\frac{K}{R}\right\rceil = R \times \left\lceil\frac{16 L \Delta}{\epsilon} + \frac{32 \sigma^2 L \Delta}{R \epsilon^2}\right\rceil.
    \end{align*}
    Using the choice of $R,$ we get
    \begin{align*}
        K \leq R \times \left\lceil\frac{48 L \Delta}{\epsilon}\right\rceil.
    \end{align*}
    In total, the algorithms will require $\left\lceil\frac{48 L \Delta}{\epsilon}\right\rceil$ by $R$ consecutive updates of $x^k$ to find an $\varepsilon$--stationary point.
    Let us define $\bar{K} \eqdef \left\lceil\frac{48 L \Delta}{\epsilon}\right\rceil$.
    Using \Cref{lem:time_R_dyn}, we know that \algname{Ringmaster ASGD} requires at most 
    $$
    T_{1} \eqdef T(R,0) = \min \left\{T \geq 0 : \sum_{i=1}^{n} \flr{\frac{1}{4} \int_{0}^{T} v_i(\tau) d \tau} \geq R\right\}
    $$
    seconds to finish the first $R$ consecutive updates of the iterates.
    Since the algorithms will finish the \emph{first} $R$ consecutive updates after at most $T_{1}$ seconds, they will start the iteration $R + 1$ before time $T_{1}$.
    Thus, using \Cref{lem:time_R_dyn} again, they will require at most 
    $$
    T_{2} \eqdef T(R,T_{1}) = \min \left\{T \geq 0 : \sum_{i=1}^{n} \flr{\frac{1}{4} \int_{T_{1}}^{T} v_i(\tau) d \tau} \geq R\right\}
    $$
    seconds to finish the first $2 \times R$ consecutive updates.
    Using the same reasoning, they will finish the first $\left\lceil\frac{48 L \Delta}{\epsilon}\right\rceil \times R$ consecutive updates after at most
    $$
    T_{\bar{K}} \eqdef T(R,T_{\bar{K} - 1}) = \min \left\{T \geq 0 : \sum_{i=1}^{n} \flr{\frac{1}{4} \int_{T_{\bar{K} - 1}}^{T} v_i(\tau) d \tau} \geq R\right\}
    $$
    seconds.
\end{proof}

\section{Derivations for the Example from Section~\ref{sec:prel}}
\label{sec:deriv}

Let $\tau_i = \sqrt{i}$ for all $i \in [n],$ then

$$
T_{\textnormal{R}} = \Theta\left(\min\limits_{m \in [n]} \left(\frac{1}{m}\sum\limits_{i=1}^m \frac{1}{\sqrt{i}}\right)^{-1} \left(\frac{L \Delta}{\varepsilon} + \frac{\sigma^2 L \Delta}{m \varepsilon^2}\right)\right).
$$

Using 
$$
\sum_{i=1}^m \frac{1}{\sqrt{i}} = \Theta\left(\sqrt{m}\right)
$$ 
for $m \geq 1$, we simplify the term:
$$
\left(\frac{1}{m}\sum\limits_{i=1}^m \frac{1}{\sqrt{i}}\right)^{-1} = \Theta(\sqrt{m}),
$$
$$
    T_{\textnormal{R}} = \Theta \left(\min\limits_{m \in [n]} \sqrt{m} \left(\frac{L \Delta}{\varepsilon} + \frac{\sigma^2 L \Delta}{m \varepsilon^2}\right) \right) = \Theta \left(\min\limits_{m \in [n]} \left(\frac{L \Delta \sqrt{m}}{\varepsilon} + \frac{\sigma^2 L \Delta}{\sqrt{m} \varepsilon^2}\right)\right).
$$
The minimum is achieved when the two terms are balanced, i.e., at 
$$
m = \min\left\{\left\lceil \frac{\sigma^2}{\varepsilon} \right\rceil, n \right\}.
$$ 
Substituting this value of $m$, we obtain:
$$
T_{\textnormal{R}} = \Theta \left(\max\left[\frac{\sigma L \Delta}{\varepsilon^{3/2}}, \frac{\sigma^2 L \Delta}{\sqrt{n} \varepsilon^2}\right]\right).
$$

We now consider $T_{\textnormal{A}}$:
$$
T_{\textnormal{A}} = \Theta\left(\left(\frac{1}{n} \sum\limits_{i=1}^{n} \frac{1}{\tau_{i}}\right)^{-1} \left(\frac{L \Delta}{\varepsilon} + \frac{\sigma^2 L \Delta}{n \varepsilon^2}\right)\right).
$$

Using 
$$
\sum_{i=1}^n \frac{1}{\sqrt{i}} = \Theta\left(\sqrt{n}\right)
$$
for $n \geq 1$, we simplify the term:
$$
\left(\frac{1}{n} \sum\limits_{i=1}^n \frac{1}{\sqrt{i}}\right)^{-1} = \Theta\left(\sqrt{n}\right).
$$

Substituting this result into $T_{\textnormal{A}}$, we have:
$$
T_{\textnormal{A}} = \Theta\left(\sqrt{n} \left(\frac{L \Delta}{\varepsilon} + \frac{\sigma^2 L \Delta}{n \varepsilon^2}\right)\right) = \Theta\left(\frac{L \Delta \sqrt{n}}{\varepsilon} + \frac{\sigma^2 L \Delta}{\sqrt{n} \varepsilon^2}\right) = \Theta\left(\max\left[\frac{L \Delta \sqrt{n}}{\varepsilon}, \frac{\sigma^2 L \Delta}{\sqrt{n} \varepsilon^2}\right]\right).
$$

\section{When the Initial Point is an $\varepsilon$--Stationary Point}
\label{sec:l_init}
Under the assumption of $L$--smoothness (Assumption~\ref{ass:lipschitz_constant}), we have:
\begin{align*}
    f(y) \leq f(x) + \inp{\nabla f(x)}{y - x} + \frac{L}{2} \norm{y - x}^2
\end{align*}
for all $x, y \in \R^d$.
Taking $y = x - \frac{1}{L} \nabla f(x),$
\begin{align*}
    f\(x - \frac{1}{L} \nabla f(x)\) \leq f(x) - \frac{1}{2 L} \norm{\nabla f(x)}^2.
\end{align*}
Since $f\(x - \frac{1}{L} \nabla f(x)\) \geq f^{\inf}$ and taking $x = x^0,$ we get
\begin{align*}
    \norm{\nabla f(x^0)}^2 \leq 2 L \Delta.
\end{align*}
Thus, if $2 L \Delta \leq \varepsilon$, then $\norm{\nabla f(x^0)}^2 \leq \varepsilon$.

\section{Experiments} 
\label{sec:experiments}
\algname{Asynchronous SGD} has consistently demonstrated its effectiveness and practicality, achieving strong performance in various applications \citep{recht2011hogwild,lian2018asynchronous,mishchenko2022asynchronous}, along with numerous other studies supporting its utility.
Our main goal of this paper was to refine the method further and establish that it is not only practical but also \emph{theoretically optimal}.

At the same time, \citet{tyurin2024optimal} found out that the previous version of \algname{Asynchronous SGD} has slow convergence when the number of workers is large, and their computation performances are heterogeneous using numerical experiments (see \Cref{fig:n_10000}).

\begin{figure}[ht]
    \centering
    \includegraphics[width=0.75\textwidth]{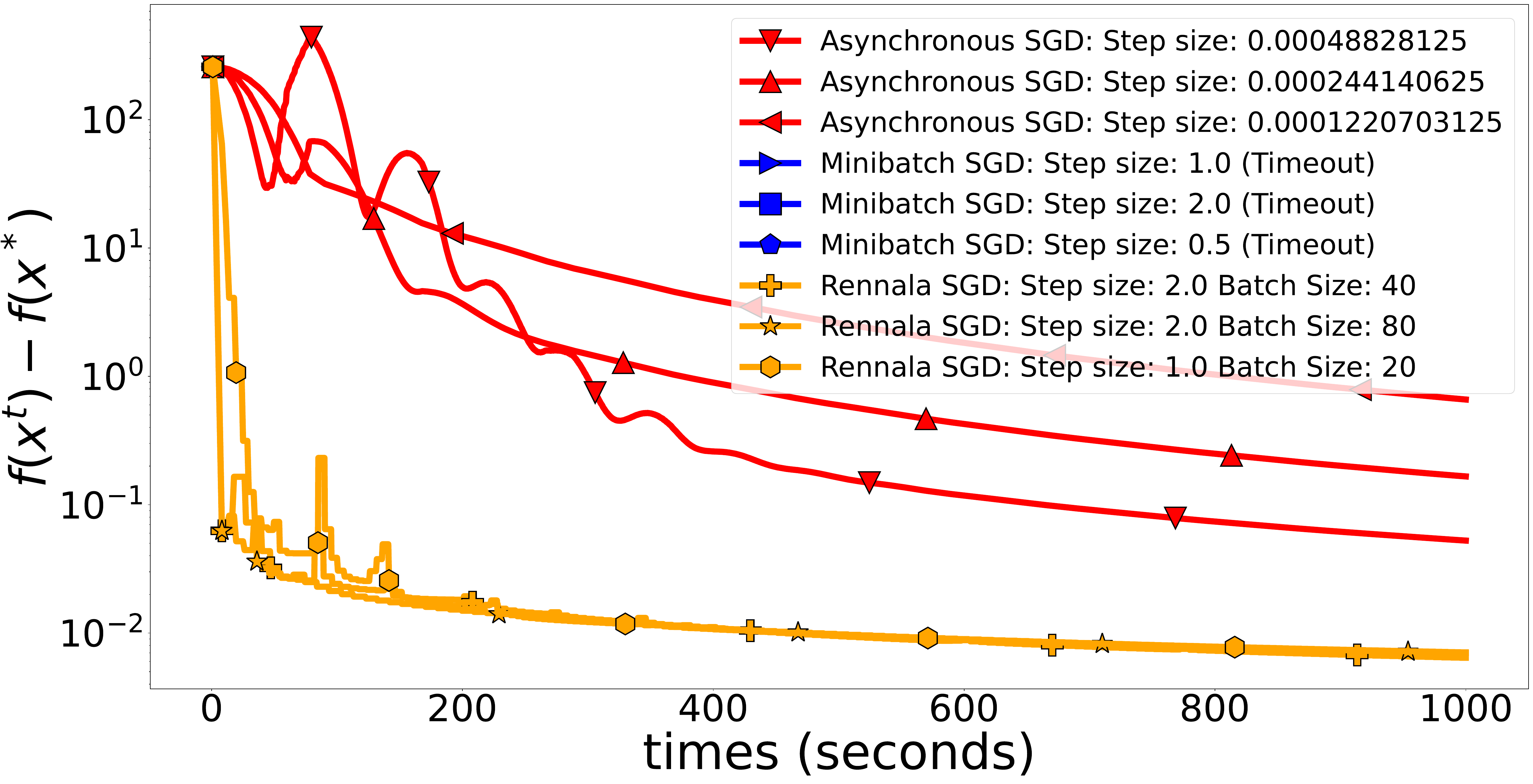}
    \caption{Experiment with $n = 10000$ from \citep{tyurin2024optimal} showing the slow convergence of the previous \algname{Asynchronous SGD} method.}
    \label{fig:n_10000}
\end{figure}

% \alexander{Change the name of Delay-Adaptive ASGD}
We now reproduce this experiment with the new \algname{Ringmaster ASGD} method and compare it with the previous \algname{Asynchronous SGD} (we call it \algname{Delay-Adaptive ASGD} in our experiments and take the version from \citep{mishchenko2022asynchronous}) and \algname{Rennala SGD} methods.
The optimization task is based on the convex quadratic function $f \,:\, \R^d \to \R$ such that 
$$
f(x) = \frac{1}{2} x^\top \mA x - b^\top x \qquad \forall x \in \mathbb{R}^d,
$$
where
\begin{align*}
    \mA = \frac{1}{4}
    \begin{bmatrix}
    2 & -1 &  & 0 \\
    -1 & \ddots & \ddots &  \\
    & \ddots & \ddots & -1 \\
    0 & & -1 & 2 \\
    \end{bmatrix}
    \in \mathbb{R}^{d \times d} ,
    \quad
    & b = \frac{1}{4}
    \begin{bmatrix}
    -1 \\
    0 \\
    \vdots \\
    0 \\
    \end{bmatrix}
    \in \mathbb{R}^d.
\end{align*}
We set $d = 1729$ and $n = 6174$.
Assume that all $n$ workers have access to the following unbiased stochastic gradients:  
$$
\nabla f(x, \xi) = \nabla f(x) + \xi,
$$
where $\xi \sim \mathcal{N}(0, 0.01^2)$.

The experiments were implemented in Python.
The distributed environment was emulated on machines with Intel(R) Xeon(R) Gold 6248 CPU @ 2.50GHz.
The computation times for each worker are simulated as $\tau_i = i + \abs{\eta_i}$ for all $i \in [n]$, where $\eta_i \sim \cN\(0,i\)$.
We tuned the stepsize from the set $\left\{5^p : p \in [-5, 5]\right\}$.
Both the batch size for \algname{Rennala SGD} and the delay threshold for \algname{Ringmaster ASGD} were tuned from the set $\left\{\lceil n / 4^p \rceil : p \in \N_{0}\right\}$.
The experimental results are shown in \Cref{fig:1}.

The obtained result confirms that \algname{Ringmaster ASGD} is indeed faster than \algname{Delay-Adaptive ASGD} and \algname{Rennala SGD} in the considered setting.
One can see the numerical experiments support that our theoretical results, and we significantly improve the convergence rate of the previous version of \algname{Asynchronous SGD} (\algname{Delay-Adaptive ASGD}).

\begin{figure}[ht]
    \begin{center}
    \centerline{\includegraphics[width=0.75\textwidth]{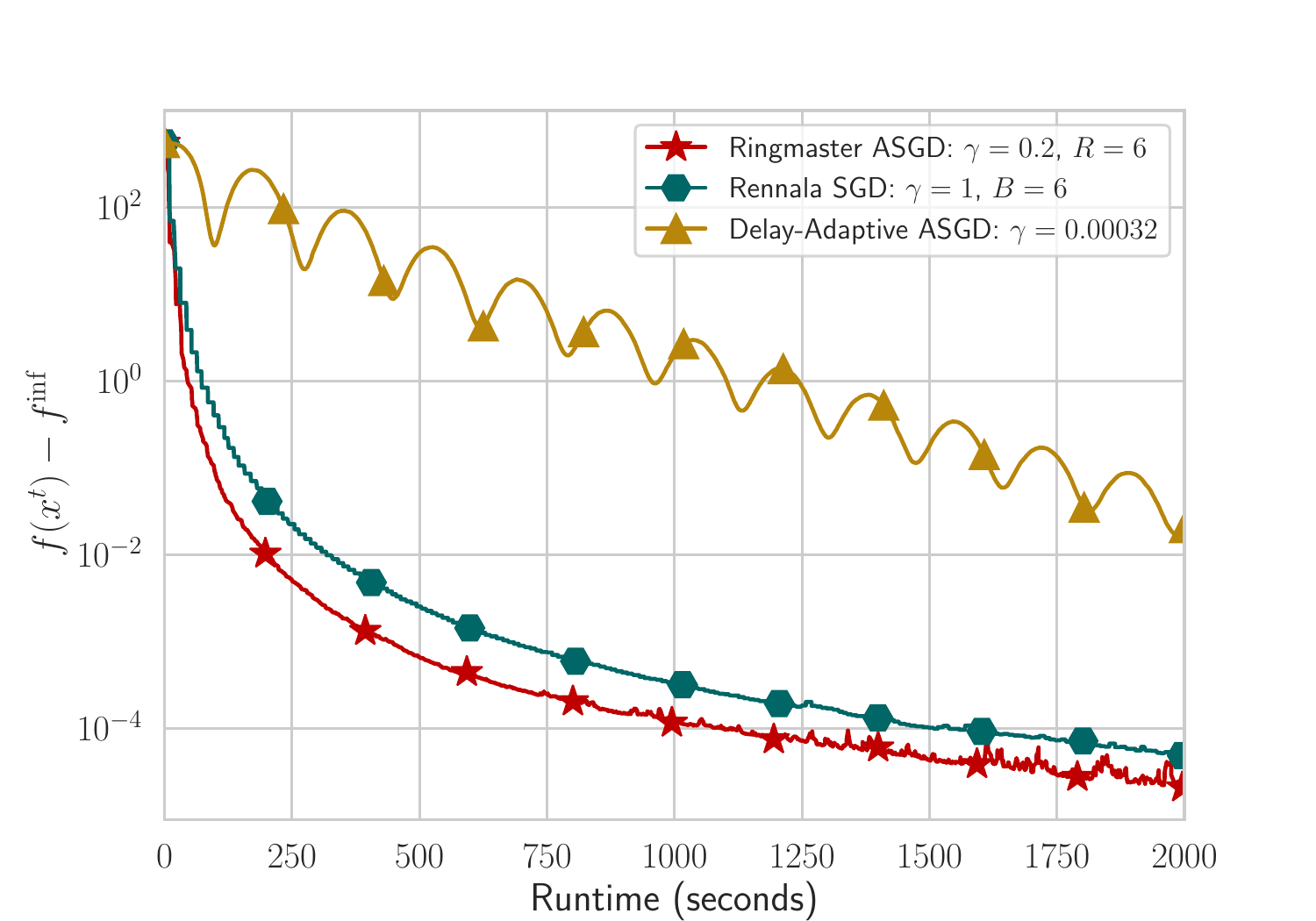}}
    \caption{Experiment with $n = 6174$ and $d = 1729$ showing the convergence of \algname{Ringmaster ASGD}, \algname{Delay-Adaptive ASGD}, and \algname{Rennala SGD}.}
    \label{fig:1}
    \end{center}
\end{figure}

\subsection{Neural Network Experiment}

To show that our method also works well for neural networks, we trained a small 20-layer neural network with ReLU activation on the MNIST dataset \citep{lecun1998gradient}.
We used the same number of workers as in the previous experiment ($n = 6174$) and kept the same time distributions.
The results are shown in \Cref{fig:mnist}.

\begin{figure}[ht]
    \begin{center}
    \centerline{\includegraphics[width=0.75\textwidth]{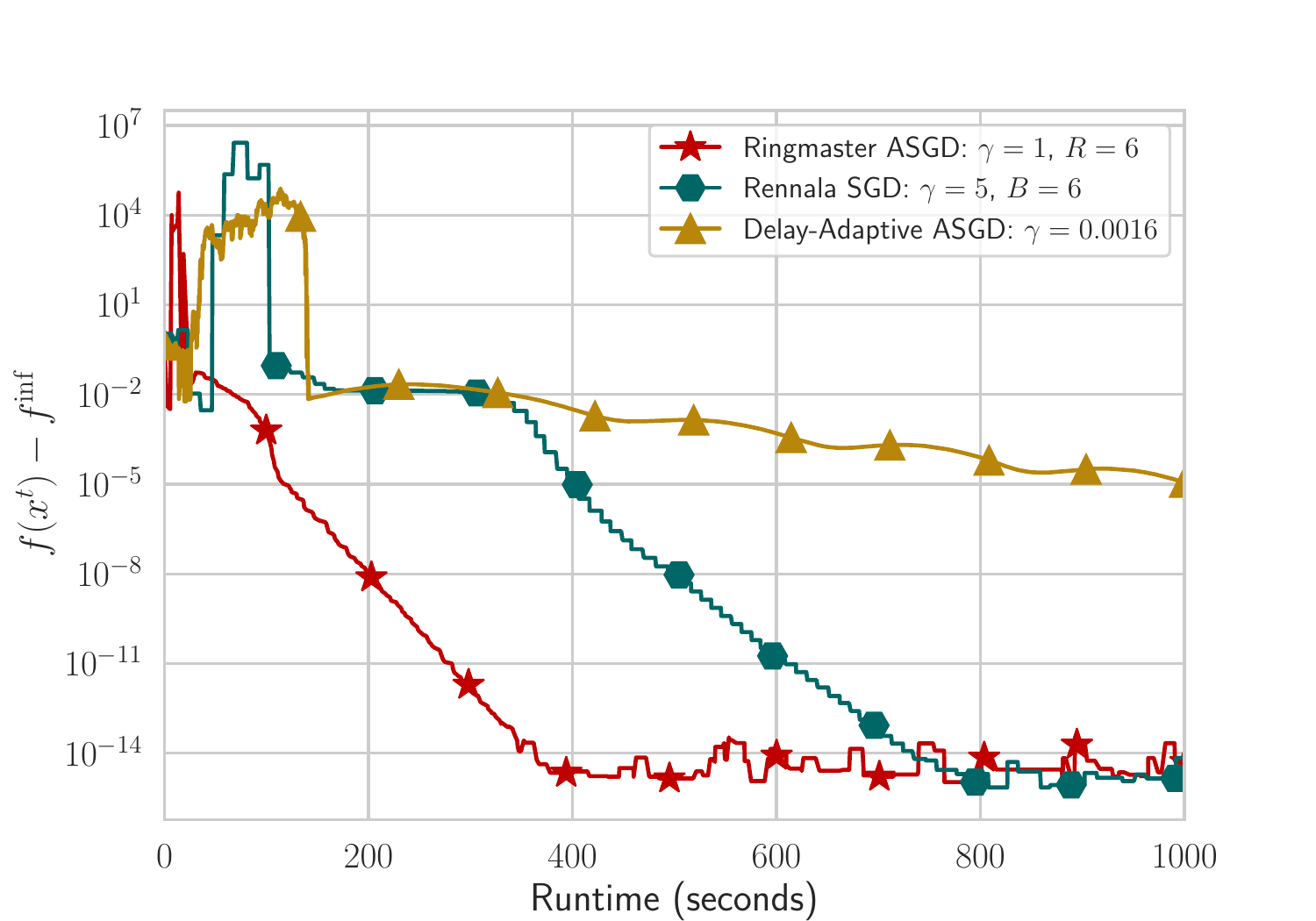}}
    \caption{
        We run an experiment on a small 2-layer neural network with ReLU activation on the MNIST dataset, showing that our method, \algname{Ringmaster ASGD}, is more robust and outperforms \algname{Delay-Adaptive ASGD} and \algname{Rennala SGD}.
    }
    \label{fig:mnist}
    \end{center}
\end{figure}

\end{document}